\documentclass{article}

\usepackage{microtype}
\usepackage{graphicx}
\usepackage{subcaption}
\usepackage{booktabs,multirow}
\usepackage{wrapfig}
\newcommand{\std}[1]{{\scriptsize $\pm$#1}}

\usepackage[accepted]{icml2026} 

\usepackage{amsmath,amssymb,amsthm,mathtools}
\usepackage{bm}
\usepackage{nicefrac}
\usepackage{siunitx}
\usepackage{bbm}

\usepackage{xcolor}
\usepackage{enumitem}
\usepackage{float}
\usepackage{titletoc}
\usepackage[textsize=tiny]{todonotes}
\usepackage{xspace}

\usepackage{thmtools}
\usepackage{thm-restate}
\usepackage{algorithm}
\usepackage{algorithmic}

\usepackage{mathtools}

\usepackage{hyperref}
\usepackage[capitalize,noabbrev]{cleveref}

\newcommand{\R}{\mathbb{R}}
\newcommand{\E}{\mathbb{E}}

\newcommand{\dd}{\mathrm{d}}

\DeclareMathOperator*{\argmin}{arg\,min}
\newcommand{\va}{\bm{a}}
\newcommand{\vx}{\bm{x}}
\newcommand{\vc}{\bm{c}}
\newcommand{\vs}{\bm{s}}
\newcommand{\vtheta}{\bm{\theta}}
\newcommand{\Sphere}{\mathbb{S}}
\newcommand{\aopt}{\va^{\star}}
\newcommand{\tilc}{\tilde{\bm{c}}}

\theoremstyle{plain}
\newtheorem{theorem}{Theorem}[section]

\theoremstyle{definition}
\newtheorem{definition}[theorem]{Definition}
\newtheorem{assumption}[theorem]{Assumption}
\theoremstyle{remark}
\newtheorem{remark}[theorem]{Remark}

\crefname{equation}{Eq.}{Eqs.}
\Crefname{equation}{Eq.}{Eqs.}
\crefname{section}{Section}{Sections}
\crefname{theorem}{Theorem}{Theorems}

\newcommand{\ours}{\textsc{LSFlow}\xspace}

\usepackage[textsize=tiny]{todonotes}

\icmltitlerunning{Latent Spherical Flow Policy for Combinatorial Reinforcement Learning}

\begin{document}

\twocolumn[
  \icmltitle{Latent Spherical Flow Policy for Reinforcement Learning \\with Combinatorial Actions }



  \icmlsetsymbol{equal}{*}

  \begin{icmlauthorlist}
    \icmlauthor{Lingkai Kong}{equal,yyy}
    \icmlauthor{Anagha Satish}{equal,yyy}
    \icmlauthor{Hezi Jiang}{yyy}
    \icmlauthor{Akseli Kangaslahti}{yyy}
    \icmlauthor{Andrew Ma}{yyy}
    \icmlauthor{Wenbo Chen}{sch}
    \icmlauthor{Mingxiao Song}{yyy}
    \icmlauthor{Lily Xu}{comp}
    \icmlauthor{Milind Tambe}{yyy}
  \end{icmlauthorlist}

  \icmlaffiliation{yyy}{Harvard University, Cambridge, MA, USA}
  \icmlaffiliation{comp}{Columbia University, New York, NY, USA}
  \icmlaffiliation{sch}{Amazon. This work was conducted outside the author’s role at Amazon.}

  \icmlcorrespondingauthor{Lingkai Kong}{lingkaikong@g.harvard.edu}

  \icmlkeywords{Machine Learning, ICML}

  \vskip 0.3in
]



 \printAffiliationsAndNotice{\icmlEqualContribution}

\begin{abstract}
Reinforcement learning (RL) with combinatorial action spaces remains challenging because feasible action sets are exponentially large and governed by complex feasibility constraints, making direct policy parameterization impractical. Existing approaches embed task-specific value functions into constrained optimization programs or learn deterministic structured policies, sacrificing generality and policy expressiveness. We propose a solver-induced \emph{latent spherical flow policy} that brings the expressiveness of modern generative policies to combinatorial RL while guaranteeing feasibility by design. Our method, \ours, learns a \emph{stochastic} policy in a compact continuous latent space via spherical flow matching, and delegates feasibility to a combinatorial optimization solver that maps each latent sample to a valid structured action. To improve efficiency, we train the value network directly in the latent space, avoiding repeated solver calls during policy optimization. To address the piecewise-constant and discontinuous value landscape induced by solver-based action selection, we introduce a smoothed Bellman operator that yields stable, well-defined learning targets. Empirically, our approach outperforms state-of-the-art baselines by an average of 20.6\% across a range of challenging combinatorial RL tasks.
\end{abstract}


\section{Introduction}

Reinforcement learning (RL) has become a powerful framework for sequential decision-making. 
Recent work shows that RL can benefit from more expressive policy classes, such as diffusion~\citep{sohl2015deep,ho2020denoising} and flow-based generative models~\citep{lipman2023flow, albergo2023building}.
These diffusion and flow policies~\citep{yang2023policy,psenka2024learning,park2025flow,zhang2025reinflow} can represent rich, multimodal stochastic action distributions, enabling diverse exploratory behavior, and have demonstrated strong empirical performance in robot control benchmarks (e.g., locomotion and dexterous manipulation)~\citep{wang2024diffusion,ma2025efficient,ren2025diffusion}.

However, RL remains substantially challenged in \emph{combinatorial action spaces}, where each action is a structured decision that must satisfy hard feasibility constraints, such as subset selection, routing, or scheduling. In these problems, the feasible set typically grows exponentially with problem size, making it intractable to explicitly represent a policy over all valid actions. Accordingly, prior work largely follows two directions: solver-based value optimization, which embeds a learned value network into a mixed-integer program (MIP) and extracts greedy actions by solving a constrained optimization problem over the feasible set~\citep{xu2025reinforcement}; and optimization-induced decision layers, which learn structured policies end-to-end~\citep{hoppe2025structured}. The former often requires solver-compatible value architectures and can scale poorly with network complexity, while the latter yields deterministic policies, limiting expressiveness and reducing effective exploration in complex combinatorial landscapes.

A natural question is whether the expressiveness of diffusion and flow policies can be transferred to combinatorial RL. Direct transfer is nontrivial because these models are typically defined over continuous spaces and do not enforce discrete feasibility constraints or reflect the combinatorial structure of the feasible set. As a result, naive adaptation can produce infeasible actions, motivating approaches that couple generative expressiveness with feasibility guarantees.

In this work, we propose \ours, a latent spherical flow policy for RL with combinatorial actions.
Our key idea is to shift the burden of enforcing feasibility away from the policy network and onto a
combinatorial optimization (CO) solver.
Specifically, we learn an expressive \emph{stochastic} policy in a continuous latent space, where each
latent sample specifies a linear objective for an optimization solver, then use a CO solver  to convert the sample into a feasible
structured action.
This two-step parameterization combines the best of both worlds: the expressiveness of modern
generative policies and feasibility guaranteed by the CO solver.
Moreover, since a linear objective depends only on the objective \emph{direction}, the latent space
is naturally spherical, enabling us to learn the policy directly on the sphere via \emph{spherical flow matching}~\citep{chen2024flow}.
Despite its compact form, the resulting policy remains fully expressive: with an appropriate spherical
distribution, it can represent any stochastic policy over feasible combinatorial actions.

The expensive CO solver is the computational bottleneck. To make this approach efficient, we train the critic directly in the latent cost space to avoid calling the solver repeatedly during policy update.
However, this introduces a central challenge: the mapping from latent cost to actions is
discontinuous due to the solver, which can destabilize Bellman backups
and value learning. We overcome this issue by introducing a \emph{smoothed Bellman operator} that
performs spherical smoothing using a von Mises--Fisher (vMF) kernel. We provide theoretical results
showing that the resulting operator admits a unique fixed point and yields a well-defined smoothed
value function that improves stability under solver-induced discontinuities.

Our main contributions are: (1)~We propose a solver-augmented spherical flow policy that yields expressive stochastic policies over feasible structured actions. To the best of our knowledge, this is the first flow- or diffusion-based policy framework for RL with combinatorial action spaces. (2)~We develop an efficient training method that learns the critic in the latent cost space and introduce a smoothed Bellman operator based on von Mises--Fisher (vMF) smoothing to mitigate solver-induced discontinuities and improve stability. (3)~We demonstrate empirically that our method outperforms state-of-the-art baselines on public benchmarks and a real-world sexually transmitted infection (STI) testing application.

\section{Background}
\subsection{Problem Definition}
\label{sec:setup}

We study reinforcement learning in a discounted \emph{combinatorial} Markov decision process (MDP)
$\big(\mathcal S,\mathcal A,P,r,\gamma\big)$.
At each timestep $h$, the agent observes a state $\vs_h\in\mathcal S$ and selects a \emph{structured}
action $\va_h\in\mathcal A(\vs_h)$, where the state-dependent feasible set
$\mathcal A(\vs)\subseteq\mathcal A$ is specified by combinatorial constraints (e.g., subset selection,
matchings, or routes).
Without loss of generality, we represent each feasible action as a binary decision vector
$\va\in\{0,1\}^m$, since any bounded integer decision can be encoded in binary~\citep{dantzig1963linear}.
After executing $\va_h$, the environment returns a reward $r(\vs_h,\va_h)$ and transitions to the next
state $\vs_{h+1}\sim P(\cdot\mid \vs_h,\va_h)$.

Our goal is to learn a stochastic policy $\pi(\va\mid\vs)$ whose support is contained in
$\mathcal A(\vs)$ for every $\vs\in\mathcal S$, and that maximizes the expected discounted return
\[
J(\pi)
~=~
\mathbb{E}_{\pi}\!\left[\sum_{h=0}^{\infty}\gamma^{h}\, r(\vs_h,\va_h)\right],
\]
where the expectation is over trajectories induced by $\pi$ and the transition kernel $P$.

For any policy $\pi$, the action-value function is defined as
\[
Q^\pi(\vs,\va)
~\triangleq~
\mathbb{E}_{\pi}\!\left[\sum_{t=0}^{\infty}\gamma^{t}\, r(\vs_t,\va_t)\,\middle|\,\vs_0=\vs,\;\va_0=\va\right].
\]

Compared to standard RL, the exponentially large feasible set $\mathcal{A}(\vs)$ makes both greedy action selection from the action-value function and direct policy parameterization challenging.

\begin{figure*}[t]
    \centering
\includegraphics[width=0.9\textwidth]{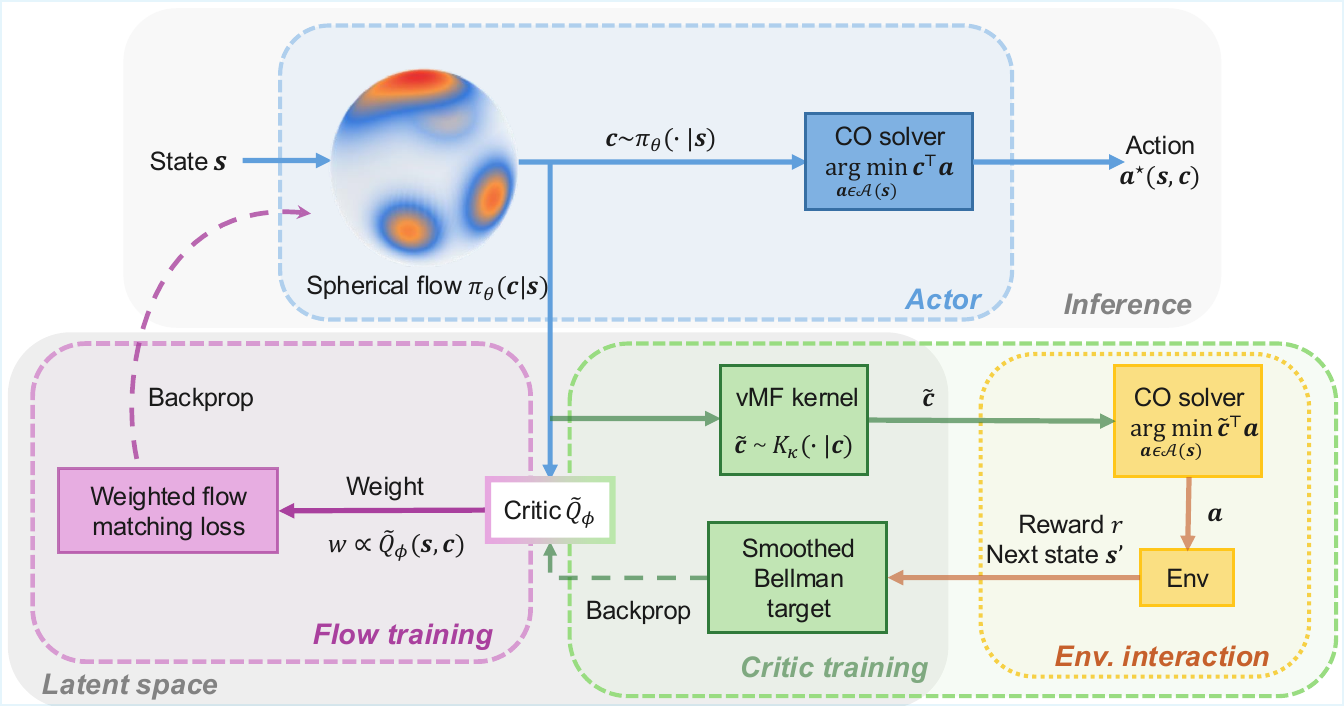}
    \caption{\textbf{Overall framework of \ours.} To enable expressive \emph{stochastic} policies under hard combinatorial constraints, we shift policy learning to a
continuous spherical \emph{cost-direction} space and let a combinatorial optimization (CO) solver enforce feasibility (\cref{sec:policy-rep}).
Crucially, both policy and critic are learned directly in this latent cost space.
We train the flow policy $\pi_\theta$ via weighted flow matching, using weights
$w\propto \tilde{Q}_\phi(\vs,\vc)$ predicted by a latent-space critic (\cref{sec:spherical-flow}).
To stabilize cost-space critic learning under solver-induced discontinuities, we apply vMF smoothing by sampling
$\tilde{\vc}\sim K_\kappa(\cdot\mid \vc)$ when constructing Bellman targets (\cref{sec:smoothing-fixedpoint}).}

    \label{fig:main}
\end{figure*}

\subsection{Flow Matching}
\label{sec:fm}

Flow matching~\citep{albergo2023building,lipman2023flow,liu2023flow} is a continuous-time generative
modeling framework based on ordinary differential equations (ODEs).
In contrast to denoising diffusion models, which are typically formulated via stochastic differential
equations (SDEs), flow models often admit simpler training and faster sampling while achieving
competitive or improved generation quality~\citep{esser2024scaling,lipman2024flow}.
Concretely, a flow model transports samples from a base distribution $p_0$ to a target distribution
$p_1$ by integrating
\begin{equation}
\label{eq:ode_flow}
\frac{d \vx_t}{d t} = v_\theta(\vx_t,t), \qquad t\in[0,1],
\end{equation}
where $v_\theta(\cdot,t)$ is a time-dependent velocity field parameterized by $\theta$.
Starting from initial sample $\vx_0\sim p_0$, integrating~\cref{eq:ode_flow} to $t=1$ produces $\vx_1$ whose
distribution approximates $p_1$.

To learn $v_\theta$, flow matching regresses the model velocity to a target path derivative induced by
a chosen interpolation between $p_0$ and $p_1$.
Specifically, sample $\vx_0\sim p_0$, $\vx_1\sim p_1$ independently, and $t\sim \mathrm{Unif}[0,1]$.
Let $\vx_t=\phi_t(\vx_0,\vx_1)$ be an interpolation and define the target velocity
$u_t(\vx_t,t)\triangleq \frac{d}{dt}\phi_t(\vx_0,\vx_1)$.
The general flow-matching objective is
\[
\label{eq:fm_general}
\textstyle
\min_{\theta}\;
\mathbb{E}_{\vx_0\sim p_0,\,\vx_1\sim p_1,\,t\sim \mathrm{Unif}[0,1]}
\!\left[\big\|v_\theta(\vx_t,t) - u_t(\vx_t,t)\big\|^2\right].
\]
The most common choice of interpolation is the linear path~\citep{liu2023flow}:
\[
\label{eq:linear_path}
\vx_t = (1-t)\vx_0 + t \vx_1,
\qquad
u_t(\vx_t,t) = \vx_1 - \vx_0,
\]
which yields a closed-form target velocity.

\subsection{Flow Policies}
In continuous-action RL, flow models naturally induce expressive stochastic policies by conditioning
the velocity field on the state, i.e., $v_\theta(\cdot,t;\vs)$~\citep{park2025flow,zhang2025reinflow, mcallister2025flow}.
A policy sample is obtained by drawing $\vx_0\sim p_0$, integrating the state-conditional ODE, and
outputting $\va \triangleq \vx_1$, which defines a distribution $\pi_\theta(\va\mid\vs)$ without
explicitly enumerating actions.

In combinatorial decision making, however, actions must lie in a discrete feasible set
$\mathcal{A}(\vs)$.
Therefore, a valid policy must place probability mass only within $\mathcal{A}(\vs)$, rather than on an
unconstrained subset of $\mathbb{R}^m$. Feasibility preservation is a central challenge that we address next.

\section{Proposed Method}
\label{sec:method}

Here, we present our method, \ours. We begin with an expressive policy representation that combines flow matching
with a combinatorial optimization solver, yielding a flexible distribution over \emph{feasible} actions
without enumerating the combinatorial action space. We then introduce an efficient objective for
policy optimization based on a smoothed Bellman operator, and provide practical implementation
details.

\subsection{Combinatorial Flow Policy Representation}
\label{sec:policy-rep}

We address the challenge of representing stochastic policies over constrained combinatorial action spaces by
introducing a \emph{solver-induced policy representation} that decouples stochasticity from
feasibility. The key idea is a two-stage construction:
(i)~we learn a continuous, state-conditional distribution over \emph{cost vectors} using flow matching,
and (ii)~we map each sampled cost vector to a feasible combinatorial action via a downstream optimization solver.
This induces an expressive distribution over feasible actions while avoiding explicit enumeration of
$\mathcal{A}(\vs)$.

\paragraph{Solver-induced flow policy.}
Let $\vc \in \mathbb{R}^m$ denote a \emph{cost vector} that parameterizes a linear objective for a combinatorial solver over the
feasible action set $\mathcal{A}(\vs)$. We define the solver mapping
\begin{align}
\label{eq:mip}
\textstyle
\aopt(\vs, \vc)=\argmin_{\va\in\mathcal{A}(\vs)} \;\; \vc^\top \va \ ,
\end{align}
where $\mathcal{A}(\vs)$ encodes the combinatorial feasibility constraints. Given state $\vs$, the policy first samples $\vc\sim\pi_\theta(\cdot\mid\vs)$ then
executes the solver output $\va=\aopt(\vs,\vc)$. This construction defines a stochastic policy over
$\mathcal{A}(\vs)$ as the pushforward of $\pi_\theta(\cdot\mid\vs)$ through $\aopt(\cdot)$. Feasibility is
guaranteed by the solver, while stochasticity is captured entirely by the
learned flow distribution over cost vectors.

\emph{Example (routing).}
In a routing problem, $\va\in\{0,1\}^m$ indicates selected edges and $\mathcal{A}(\vs)$ enforces
connectivity, flow, and budget constraints that depend on the current state~$\vs$
(e.g., available edges, travel times, or blocked roads). Each entry of $\vc$ specifies a state-dependent
edge cost, and $\aopt(\vs,\vc)$ returns the minimum-cost feasible route under these costs.

\begin{restatable}[Positive scale invariance of the solver mapping]{lemma}{lemScaleInvariance}
\label{lem:scale-invariance}
Fix any state $\vs$ with $\mathcal{A}(\vs)\neq\emptyset$, and let
$\aopt(\vs,\vc)=\argmin_{\va\in\mathcal{A}(\vs)} \vc^\top \va$.
Then the solver mapping is positively scale invariant: for any $\alpha>0$,
\[
\aopt(\vs,\alpha\vc)=\aopt(\vs,\vc).
\]
\end{restatable}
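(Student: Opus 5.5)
The plan is to show that the two optimization problems defining $\aopt(\vs,\alpha\vc)$ and $\aopt(\vs,\vc)$ have identical sets of minimizers. Both are taken over the same feasible set $\mathcal{A}(\vs)$, which does not depend on the cost vector. Since $\mathcal{A}(\vs)\subseteq\{0,1\}^m$ is finite and nonempty, each minimum is attained and each $\argmin$ is a well-defined nonempty subset of $\mathcal{A}(\vs)$. I will therefore read the claimed identity as equality of these $\argmin$ sets. If the solver returns a single element via a fixed tie-breaking rule that depends only on the minimizer set, for example a lexicographic rule, the claim then follows for that returned element as well.

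The key step is the order-preservation of positive scaling. For any $\va,\va'\in\mathcal{A}(\vs)$ and $\alpha>0$,
\[
(\alpha\vc)^\top\va \le (\alpha\vc)^\top\va' \iff \alpha\,(\vc^\top\va-\vc^\top\va')\le 0 \iff \vc^\top\va\le\vc^\top\va',
\]
where the last equivalence is obtained by dividing by $\alpha>0$. Hence $\va^\ast$ satisfies $(\alpha\vc)^\top\va^\ast\le(\alpha\vc)^\top\va'$ for all $\va'\in\mathcal{A}(\vs)$ if and only if $\vc^\top\va^\ast\le\vc^\top\va'$ for all $\va'$. This gives both inclusions of the $\argmin$ sets at once.

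There is no real obstacle here. The only point requiring care is the interpretation of $\argmin$ when ties occur. I will state explicitly that equality holds at the level of minimizer sets, and for any deterministic selection rule that depends on the problem only through that set. I will also note that the argument uses $\alpha>0$ essentially: for $\alpha<0$ the order reverses, so the $\argmin$ becomes an $\argmax$, and for $\alpha=0$ every feasible point is optimal. This is why the lemma restricts to positive scalings, which in turn justifies treating the cost as a direction on the sphere.
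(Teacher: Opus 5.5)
Your proposal is correct and follows the paper's proof: both observe that $(\alpha\vc)^\top\va=\alpha(\vc^\top\va)$ with $\alpha>0$ preserves the ordering of objective values over $\mathcal{A}(\vs)$, so the minimizer sets coincide. Your added remark that this also covers any deterministic tie-breaking rule depending only on the minimizer set is a sensible clarification that the paper leaves implicit.
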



By \cref{lem:scale-invariance}, for any fixed state $\vs$, the solver output $\aopt(\vs,\vc)$ depends
only on the \emph{direction} of $\vc$, not its magnitude: any positive rescaling leaves the induced
action unchanged. To remove this redundancy and obtain a compact policy domain, we
restrict cost vectors to the unit sphere,
\[
\label{eq:sphere}
\mathcal{C}=\Sphere^{m-1}:=\{\vc\in\mathbb{R}^m:\|\vc\|_2=1\}.
\]
Accordingly, we model $\pi_\theta(\vc\mid\vs)$ as a distribution on $\Sphere^{m-1}$. This spherical
parameterization matches the solver's invariance and provides a well-behaved manifold on which we
apply flow matching to learn rich, state-conditional cost distributions.


Although each cost vector induces only a linear objective for the solver, this does not limit the
policies it can express: we now show that, with a suitable distribution over cost directions on the
sphere, the solver-induced construction can represent \emph{any} stochastic policy over $\mathcal A(\vs)$.

\begin{restatable}[Exact expressivity of solver-induced policies]{proposition}{propExpressivity}
\label{prop:expressivity}
Fix a state $\vs$ and assume the feasible action set is finite.
Then for any target distribution $\mu$ over $\mathcal{A}(\vs)$, there exists a
distribution $\pi$ over $\Sphere^{m-1}$ such that if we sample $\vc\sim\pi$ and take the solver
action $\va=\aopt(\vs,\vc)$, then $\va$ is distributed according to $\mu$. Equivalently, for all
$\va\in\mathcal{A}(\vs)$,
\[
\mathbb{P}\!\left[\aopt(\vs,\vc)=\va\right]\;=\;\mu(\va).
\]
Here, $\mathbb{P}[\cdot]$ denotes probability with respect to the randomness of drawing $\vc\sim\pi$.
\end{restatable}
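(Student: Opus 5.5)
The plan is to build $\pi$ as a finite mixture of point masses on $\Sphere^{m-1}$, one for each feasible action. Each point mass sits at a cost direction for which that action is the \emph{unique} minimizer of the linear objective in \cref{eq:mip}. Since $\mathcal{A}(\vs)\subseteq\{0,1\}^m$ is finite, it is enough to exhibit, for every $\va\in\mathcal{A}(\vs)$, a unit vector $\vc_{\va}$ with $\aopt(\vs,\vc_{\va})=\{\va\}$. We then set
\[
\pi \;=\; \sum_{\va\in\mathcal{A}(\vs)} \mu(\va)\,\delta_{\vc_{\va}} .
\]

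\textbf{Step 1 (separating cost vectors).} For a binary action $\va$, take $\bar\vc_{\va}=\mathbf{1}-2\va$, that is, entry $-1$ on coordinates where $a_i=1$ and $+1$ elsewhere. For any $\va'\in\{0,1\}^m$ this gives
\[
\bar\vc_{\va}^\top\va'=|\va'\setminus\va|-|\va'\cap\va|,
\]
where vectors are identified with their supports. The value at $\va'=\va$ is $-|\va|$. For any other $\va'$ we have $|\va'\setminus\va|\ge 0$ and $|\va'\cap\va|\le|\va|$. Equality in both holds only if $\va'\subseteq\va$ and $\va\subseteq\va'$, i.e.\ $\va'=\va$. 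So $\va$ is the strict minimizer over all of $\{0,1\}^m$, and hence over $\mathcal{A}(\vs)$.

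Since $\|\bar\vc_{\va}\|_2=\sqrt m>0$, \cref{lem:scale-invariance} lets us normalize to $\vc_{\va}=\bar\vc_{\va}/\sqrt m\in\Sphere^{m-1}$ without changing the minimizer. Geometrically, every binary point is an extreme point of the hypercube, and therefore of $\mathrm{conv}\,\mathcal{A}(\vs)$, and $\vc_{\va}$ is an explicit exposing direction.

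\textbf{Step 2 (pushforward computation).} Distinct actions give distinct directions $\vc_{\va}$, because $\va\mapsto\mathbf{1}-2\va$ is injective. Hence under $\pi$ the event $\{\aopt(\vs,\vc)=\va\}$ is exactly the event $\{\vc=\vc_{\va}\}$, which has probability $\mu(\va)$. This is the claimed identity.

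\textbf{Main obstacle.} The only subtle point is that $\aopt$ is an argmin, which could be set-valued, so one has to worry about ties and solver tie-breaking. Step 1 removes this by guaranteeing a unique minimizer at every atom of $\pi$, so the result does not depend on any tie-breaking rule.

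If a distribution with a density is preferred, for instance to be learnable by flow matching, the atoms can be spread out. Strictness at $\vc_{\va}$ has a positive margin over the finitely many competitors, so the minimizer stays $\va$ on a small open spherical cap around $\vc_{\va}$. Replacing each $\delta_{\vc_{\va}}$ with the uniform distribution on that cap, with the caps chosen small enough to be disjoint, gives the same pushforward.
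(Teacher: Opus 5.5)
Your proof is correct. It shares the paper's two-part skeleton: first show that every binary feasible action is the unique minimizer for some cost direction, then mix with weights $\mu(\va)$. Both parts are carried out differently, though.

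\textbf{First part.} The paper argues abstractly: binary coordinates force each $\va$ to be a vertex of $\mathrm{conv}\,\mathcal{A}(\vs)$, and a separation argument then supplies some exposing vector $\vc_0$. Your $\mathbf{1}-2\va$ is an explicit exposing direction that works uniformly over all of $\{0,1\}^m$, so no convex-geometry lemma is needed. Moreover, $\bar\vc_{\va}^\top(\va'-\va)$ is exactly the Hamming distance between $\va'$ and $\va$. This gives an explicit margin: every $\hat\vc\in\Sphere^{m-1}$ with $\|\hat\vc-\vc_{\va}\|_2<1/\sqrt{m}$ still has $\va$ as its unique minimizer.

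\textbf{Second part.} The paper takes $\pi$ to be a mixture of normalized surface measures on the full solver regions $\mathcal{C}_{\va}$. That relies on a deterministic tie-breaking rule to make these regions a partition, plus a measurability remark. Your atoms sit at points with a unique minimizer, so your argument does not depend on how $\aopt$ breaks ties.

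\textbf{What each buys.} The paper's choice gives a $\pi$ with a density with respect to surface measure. That is the kind of distribution a spherical flow model started from the uniform base actually produces, and atoms cannot be produced exactly. Your atomic $\pi$ proves the statement as written. Your closing cap-smoothing remark then recovers a density version, which is the paper's construction with each region $\mathcal{C}_{\va}$ replaced by a cap inside its strict-optimality part. The caps need no further shrinking for disjointness: each lies in the strict-optimality region of its own action, and those regions are pairwise disjoint.
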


\paragraph{Spherical flow policy.}
To parameterize a stochastic policy over unit-norm cost vectors $\vc\in\Sphere^{m-1}$, we model
$\pi_\theta(\vc\mid\vs)$ using \emph{spherical flow matching}~\citep{chen2024flow}. Concretely, we
learn a time-dependent vector field $v_\theta(\vc,\vs,t)$ and generate samples
$\vc_1\sim\pi_\theta(\cdot\mid\vs)$ by integrating a projected ODE that remains on $\Sphere^{m-1}$:
\begin{equation}
\label{eq:sphere-ode}
\frac{d\vc_t}{dt}
=
\Pi_{\vc_t}\, v_\theta(\vc_t,\vs,t),
\qquad
\vc_0\sim p_0,
\quad
\vc_t\in\Sphere^{m-1},
\end{equation}
where $p_0$ is a fixed base distribution on $\Sphere^{m-1}$ (e.g., the uniform distribution), and
$\Pi_{\vc}=\mathbf{I}-\vc\vc^\top$ is the orthogonal projection onto the tangent space at $\vc$.
This projection ensures that the flow stays on the spherical manifold while retaining the expressive
capacity of continuous-time generative models.
\begin{remark}
The reason we choose flow matching over diffusion is twofold: (1) spherical flow matching achieves substantially better performance than spherical diffusion models, as shown in \citet{chen2024flow}; (2) implementing spherical diffusion models is considerably more complex in practice.
\end{remark}

\subsection{Efficient Policy Learning in Latent Cost Space}
\label{sec:spherical-flow}

Our goal is to optimize a solver-induced spherical flow policy to maximize expected return. Recall that the policy outputs a cost direction $\vc$ on the sphere, and the environment action is obtained only \emph{afterwards} via the CO solver, $\va=\aopt(\vs,\vc)$. End-to-end differentiation through both the flow sampler and the solver is computationally expensive, and the solver mapping is non-smooth. Instead, we perform policy improvement \emph{entirely in latent space} via a \emph{proposal-and-weight} update. Crucially, we compute reweighting scores using a \emph{cost-space critic} $\widetilde Q(\vs,\vc)$, which avoids repeated solver calls during the policy optimization loop.

\paragraph{Weighted spherical flow-matching objective.}
At iteration $k$, we sample proposal cost directions $\vc_1\sim\pi_k(\cdot\mid\vs)$ and improve the next policy by refitting the spherical flow model
to a \emph{weighted} version of these latent samples. Each proposal is assigned an importance weight
\[
\label{eq:md-weight}
\textstyle
w(\vs,\vc)\ \propto\ \exp\!\Big(\tfrac{1}{\lambda}Q(\vs,\aopt(\vs,\vc))\Big),
\]
where $\lambda>0$ controls the update aggressiveness (smaller $\lambda$ yields stronger reweighting).
We then minimize the weighted spherical flow-matching loss
\begin{equation}
\label{eq:rfm-md}
\mathcal{L}(\theta)=
\E\!\Big[
w(\vs,\vc_1)\,
\big\|\Pi_{\vc_t}v_\theta(\vc_t,\vs,t)-u(\vc_t,\vs,t)\big\|_2^2
\Big],
\end{equation}
\noindent where the expectation is over $\vs\sim\mathcal{D}$, $\vc_1\sim\pi_k(\cdot\mid\vs)$,
$\vc_0\sim p_0$, and $t\sim\mathcal U(0,1)$. Here $\mathcal{D}$ is the replay buffer, $\vc_t$ is the
geodesic interpolation between $\vc_0$ and $\vc_1$, $\Pi_{\vc_t}$ projects onto the tangent space at $\vc_t$,
and $u(\vc_t,\vs,t)$ is the spherical flow-matching target (Appendix~\ref{app:spherical-flow}).
Larger $w(\vs,\vc_1)$ upweights high-value \emph{latent directions}, shifting $\pi_{k+1}$ toward regions of the cost sphere
that produce better solver outputs.

\paragraph{Connection to KL-regularized policy improvement.}
Prior work on diffusion policies shows that exponential reweighting corresponds to a
KL-regularized policy improvement step~\citep{ma2025efficient}; a related reweighting also
appears in offline energy-weighted flow matching~\citep{zhang2025energyweighted}. We extend this
interpretation to online combinatorial RL in the latent cost space. In particular,
the updated cost policy $\pi_{k+1}$ solves the KL-regularized update~\citep{tomar2022mirror}
\[
\pi_{k+1}(\cdot\mid\vs)\in
\arg\max_{\pi(\cdot\mid\vs)}
\begin{aligned}[t]
&\E_{\vc\sim\pi(\cdot\mid\vs)}\!\big[Q(\vs,\aopt(\vs,\vc))\big]
\\
&\quad-\lambda\,\text{KL}\!\big(\pi(\cdot\mid\vs)\,\|\,\pi_k(\cdot\mid\vs)\big).
\end{aligned}
\]
The KL term acts as a trust region---analogous to PPO~\citep{schulman2017proximal} and TRPO~\citep{schulman2015trust}---and stabilizes learning.
A proof is provided in \cref{app:reweighted-fm-pmd}.

\noindent\textbf{Computing weights efficiently via a cost-space critic.}
A naive implementation would learn an action-value function $Q(\vs,\va)$ and score each proposal by evaluating
$Q(\vs,\aopt(\vs,\vc))$, which would require calling the solver \emph{inside the policy update loop} and become a major
bottleneck when many latent proposals are used per state.
To avoid this, we learn a critic \emph{directly on the latent cost space}:
\begin{equation}
\label{eq:qtilde}
\widetilde{Q}_{\phi}(\vs,\vc)\;:=\; Q(\vs,\aopt(\vs,\vc)).
\end{equation}
In practice, $\widetilde Q_\phi(\vs,\vc)$ is a neural network trained from rewards generated by executing
$\aopt(\vs,\vc)$ in the environment (see \cref{sec:practical}). This amortizes solver cost: the solver is invoked once per
environment step to execute an action, rather than repeatedly during policy optimization. As a result, the update in
\cref{eq:rfm-md} depends only on $(\vs,\vc)$ pairs and can be performed without additional solver calls.

\begin{figure}
    \centering
    \includegraphics[width=\linewidth]{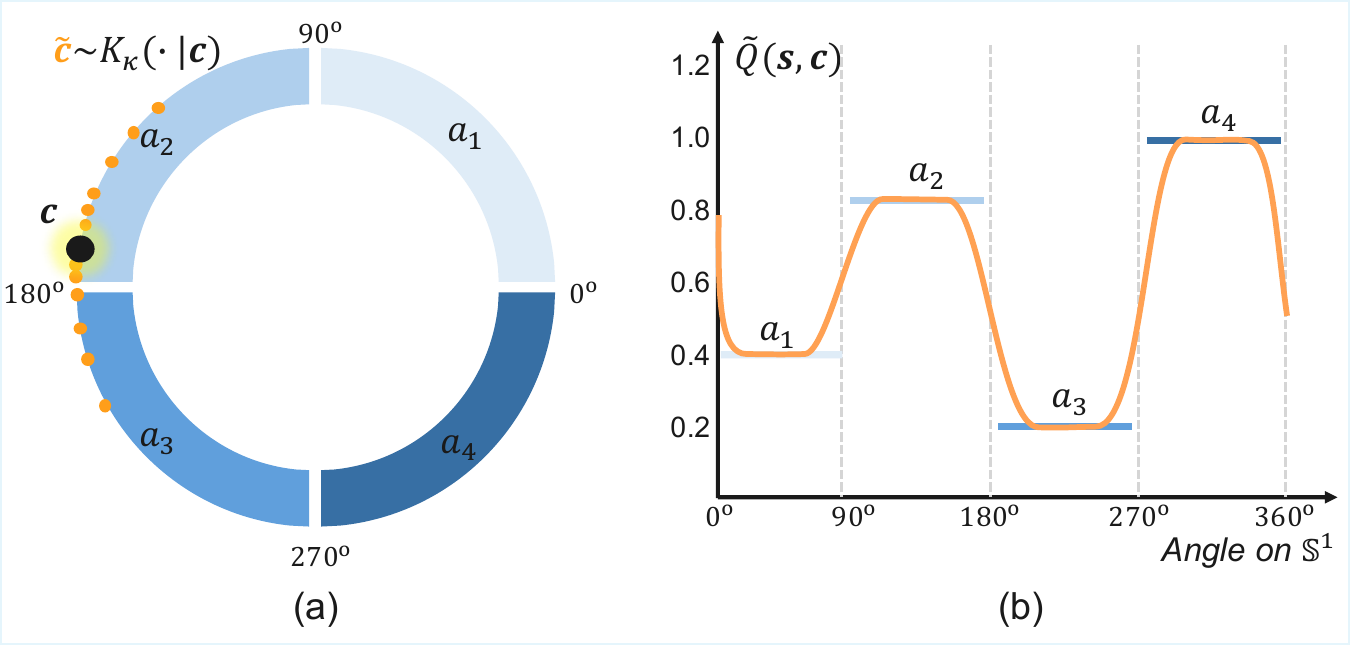}
    \caption{(a) The solver partitions the latent space on $\mathbb{S}^1$
 into regions that map to different actions. We smooth this partition by averaging with a vMF kernel 
$K_{\kappa}(\cdot|\vc)$. (b) The corresponding value 
$\tilde Q(\vs,\vc)$ is originally piecewise constant across regions, but becomes a smooth function after vMF smoothing.}
    \label{fig:smoothing}
\end{figure}

\subsection{On-Sphere Smoothing}
\label{sec:smoothing-fixedpoint}

The cost-space critic in~\cref{eq:qtilde} amortizes solver calls and enables efficient policy
optimization. However, it also introduces a stability challenge: the mapping
$\vc \mapsto \aopt(\vs,\vc)$ is \emph{piecewise constant} over the sphere. Intuitively, the solver
returns the same action throughout each optimality region and switches abruptly at region boundaries
(e.g., as characterized by the normal fan of a polyhedral feasible set). As a result, the induced
cost-space critic
$\widetilde Q(\vs,\vc)=Q(\vs,\aopt(\vs,\vc))$
can be highly non-smooth in $\vc$: even a tiny perturbation of $\vc$ may cross a boundary and cause
a discontinuous jump in $\aopt(\vs,\vc)$. This non-smoothness leads to unstable bootstrap targets and
high-variance gradients when fitting $\widetilde Q_\phi$ and updating the policy.

To stabilize learning, we introduce \emph{kernel smoothing on the sphere}. The key idea is to
replace the dependence on a single cost direction by a local average over nearby directions, which
removes the discontinuities \emph{in the learning target} induced by the solver.

\paragraph{vMF smoothing kernel.}
Fix a concentration parameter $\kappa>0$. The von Mises--Fisher (vMF) kernel~\citep{nunez2005bayesian}
defines a distribution over directions $\tilde{\vc}$ in a neighborhood of a center direction $\vc$ on the sphere:
\begin{equation}\label{eq:vmf}
K_\kappa(\tilde\vc\mid\vc)
~\propto~
\exp\!\big(\kappa\,\vc^\top\tilde\vc\big),
\qquad \vc,\tilde\vc\in\Sphere^{m-1}.
\end{equation}
Larger $\kappa$ yields a more concentrated kernel, with $K_\kappa(\cdot\mid\vc)$ peaking more sharply around $\vc$.

\paragraph{Smoothed Bellman operator.}
Given a policy $\pi(\vc\mid\vs)$, we define a smoothed Bellman operator that smooths
(i)~the current-step cost direction before solving, and
(ii)~the bootstrap direction before evaluating the critic.
For any bounded $Q:\mathcal S\times\Sphere^{m-1}\to\mathbb R$,
\begin{equation}
\label{eq:Tkappa}
(\mathcal T_\kappa^\pi Q)(\vs,\vc)
=\E\!\left[r\big(\vs,\aopt(\vs,\tilc)\big)+\gamma\,Q(\vs',\tilde\vc')\right],
\end{equation}
\noindent where the expectation is over
$\tilc\sim K_\kappa(\cdot\mid\vc)$,
$\vs'\sim P(\cdot\mid\vs,\aopt(\vs,\tilc))$,
$\vc'\sim\pi(\cdot\mid\vs')$, and
$\tilde\vc'\sim K_\kappa(\cdot\mid\vc')$.

Compared to the standard Bellman operator, $\mathcal T_\kappa^\pi$ replaces the discontinuous
dependence on a single solver output $\aopt(\vs,\vc)$ with a local average over $\aopt(\vs,\tilc)$
for nearby directions $\tilc$. It likewise smooths the bootstrap argument by evaluating $Q$ at a
perturbed next-step direction $\tilde\vc'$. As a result, the learning targets vary smoothly with
$\vc$, improving stability and reducing gradient variance, while still enforcing feasibility
through the solver $\aopt(\vs,\cdot)$.

\begin{assumption}[Boundedness and measurability]
\label{assump:bounded}
Rewards $r$ are bounded, transitions $P$ and policy $\pi$ are measurable, and $Q$ is bounded.
\end{assumption}

\begin{restatable}[Contraction and smoothness of the fixed point]{theorem}{thmSmoothFixedPoint}
\label{thm:smooth-fixed-point}
Let $\mathcal B_\infty$ be the space of bounded functions
$Q:\mathcal S\times \Sphere^{m-1}\to\mathbb R$ equipped with the sup norm
$\|Q\|_\infty := \sup_{\vs\in\mathcal S,\;\vc\in\Sphere^{m-1}} |Q(\vs,\vc)|$.
For a fixed $\vs$, the notation $Q(\vs,\cdot)\in C^\infty(\Sphere^{m-1})$ means that the map
$\vc\mapsto Q(\vs,\vc)$ is infinitely differentiable on $\Sphere^{m-1}$.
Under Assumption~\ref{assump:bounded} with $0\le\gamma<1$, and using the vMF kernel
$K_\kappa$ in~\cref{eq:vmf}, the following hold:
\begin{enumerate}[leftmargin=1.4em,itemsep=2pt,topsep=2pt]
\item (\emph{$\gamma$-contraction}) $\mathcal T_\kappa^\pi$ is a $\gamma$-contraction on
$(\mathcal B_\infty,\|\cdot\|_\infty)$ and admits a unique fixed point $Q_\kappa^\pi$.
\item (\emph{$C^\infty$ smoothing in $\vc$}) For any $Q\in\mathcal B_\infty$ and any $\vs$,
$(\mathcal T_\kappa^\pi Q)(\vs,\cdot)\in C^\infty(\Sphere^{m-1})$.
\item (\emph{Regularity of the fixed point}) Consequently, for every $\vs$,
$Q_\kappa^\pi(\vs,\cdot)\in C^\infty(\Sphere^{m-1})$.
\end{enumerate}
\end{restatable}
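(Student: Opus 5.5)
The plan is to rewrite $\mathcal T_\kappa^\pi$ as an integral of a bounded function against the vMF kernel in the current-step variable $\tilc$. All three claims then follow from standard facts about kernel integral operators.

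First, define the auxiliary function
\[
G_Q(\vs,\tilc) := r\big(\vs,\aopt(\vs,\tilc)\big) + \gamma\,\E\!\left[Q(\vs',\tilde\vc')\right],
\]
where the inner expectation is over $\vs'\sim P(\cdot\mid\vs,\aopt(\vs,\tilc))$, $\vc'\sim\pi(\cdot\mid\vs')$ and $\tilde\vc'\sim K_\kappa(\cdot\mid\vc')$. By Fubini/tower, which is justified by Assumption~\ref{assump:bounded} since all integrands are bounded and measurable,
\[
(\mathcal T_\kappa^\pi Q)(\vs,\vc)=\int_{\Sphere^{m-1}} G_Q(\vs,\tilc)\,K_\kappa(\tilc\mid\vc)\,\dd\sigma(\tilc).
\]
Here $\sigma$ is the surface measure and $K_\kappa(\tilc\mid\vc)=C_m(\kappa)^{-1}\exp(\kappa\vc^\top\tilc)$. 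The normalizer $C_m(\kappa)=\int\exp(\kappa\vc^\top\tilc)\,\dd\sigma(\tilc)$ does not depend on $\vc$ by rotation invariance of $\sigma$. Measurability of $\tilc\mapsto\aopt(\vs,\tilc)$ is needed here. It holds because this map is piecewise constant on the finitely many (or countably many) optimality cones, with any fixed measurable tie-breaking rule; I would state this as part of the measurability assumption. Note that $|G_Q|\le \|r\|_\infty+\gamma\|Q\|_\infty$.

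\emph{Step 1 (contraction).} For $Q_1,Q_2\in\mathcal B_\infty$, the reward terms cancel, so
\[
|(\mathcal T_\kappa^\pi Q_1-\mathcal T_\kappa^\pi Q_2)(\vs,\vc)|\le\gamma\,\E\big[|Q_1-Q_2|(\vs',\tilde\vc')\big]\le\gamma\|Q_1-Q_2\|_\infty,
\]
since every kernel involved is a probability measure. The bound on $G_Q$ shows that $\mathcal T_\kappa^\pi$ maps $\mathcal B_\infty$ into itself. $\mathcal B_\infty$ with the sup norm is a Banach space, so Banach's fixed point theorem gives a unique fixed point $Q_\kappa^\pi$.

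\emph{Step 2 ($C^\infty$ smoothing).} The map $\vc\mapsto\exp(\kappa\vc^\top\tilc)$ is the restriction to the sphere of an entire function on $\R^m$. Its ambient partial derivatives of every order $\alpha$ satisfy
\[
\big|\partial_{\vc}^\alpha \exp(\kappa\vc^\top\tilc)\big|\le \kappa^{|\alpha|}e^{\kappa\|\vc\|}
\]
uniformly in $\tilc\in\Sphere^{m-1}$, for $\vc$ in a bounded neighborhood of the sphere. Define the extension
\[
F(\vc)=C_m(\kappa)^{-1}\int G_Q(\vs,\tilc)\,e^{\kappa\vc^\top\tilc}\,\dd\sigma(\tilc)
\]
for $\vc$ in an open neighborhood of $\Sphere^{m-1}$ in $\R^m$. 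Because $G_Q$ is bounded and $\sigma$ is finite, dominated convergence allows repeated differentiation under the integral sign, so $F\in C^\infty$ on that neighborhood. Its restriction to the embedded submanifold $\Sphere^{m-1}$ is therefore $C^\infty$, and on the sphere this restriction equals $(\mathcal T_\kappa^\pi Q)(\vs,\cdot)$. The main point needing care is exactly this step. One must differentiate the extension $F$ rather than the normalized kernel off the sphere; the normalizer is constant only on the sphere, but we never need it off the sphere.

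\emph{Step 3 (regularity of the fixed point).} Apply Step 2 to $Q=Q_\kappa^\pi\in\mathcal B_\infty$. This gives $Q_\kappa^\pi(\vs,\cdot)=(\mathcal T_\kappa^\pi Q_\kappa^\pi)(\vs,\cdot)\in C^\infty(\Sphere^{m-1})$ for every $\vs$.

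The only real obstacle is measurability and well-definedness of $\tilc\mapsto\aopt(\vs,\tilc)$ in the presence of ties. I would handle it with a fixed measurable selection. Ties occur only on a $\sigma$-null set of boundaries between optimality cones when $\mathcal A(\vs)$ is finite, so the integral, and hence the operator, does not depend on the tie-breaking rule.
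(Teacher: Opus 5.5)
Your proposal is correct and follows the paper's proof essentially step for step. You use the same rewriting of $\mathcal T_\kappa^\pi Q$ as a vMF-kernel integral of a bounded function, the same reward-cancellation contraction with the Banach fixed-point theorem, differentiation under the integral sign via dominated convergence, and then part (2) applied to the fixed point. The only difference is cosmetic: you differentiate a smooth ambient extension on a neighborhood of $\Sphere^{m-1}$ and restrict it, where the paper works in local charts. Your explicit remarks that the vMF normalizer does not depend on $\vc$, and that solver ties form a $\sigma$-null set, are points the paper leaves implicit.
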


\cref{thm:smooth-fixed-point} establishes that the smoothed Bellman operator admits a unique,
globally well-defined fixed point whose value function is infinitely differentiable in the cost
direction. This smoothness is what makes the operator useful in practice: it yields stable,
low-variance bootstrap targets that vary smoothly with $\vc$, which is precisely what stabilizes
critic and policy updates under the otherwise discontinuous solver mapping.

This stability, however, comes from modifying the operator itself: $\mathcal{T}^{\pi}_{\kappa}$ no longer shares the fixed point of the original Bellman operator $\mathcal{T}^{\pi}$. At any finite $\kappa$ this introduces a bias---a deliberate trade-off against the variance reduction above, which we study empirically in Section~\ref{sec:ablation}. A natural question is therefore whether this bias is benign or whether smoothing fundamentally distorts the quantity we ultimately care about. The following result shows the former: the smoothed fixed point is \emph{consistent}, recovering the unsmoothed value as the kernel concentrates.


\begin{restatable}[Consistency of the smoothed fixed point]{theorem}{thmConsistency}
\label{thm:consistency}
For each reachable state $\vs$, let
$\mathcal B_{\vs} := \{\vc \in \Sphere^{m-1} : \argmin_{\va\in\mathcal A(\vs)} \vc^\top \va \text{ is not a singleton}\}$
denote the solver-switching boundary, and assume $\pi(\cdot\mid\vs)$ assigns zero
mass to $\mathcal B_{\vs}$. Then for every reachable $\vs$ and every $\vc\notin\mathcal B_{\vs}$,
\[
Q_\kappa^\pi(\vs,\vc)\;\longrightarrow\;Q^\pi(\vs,\vc)\qquad\text{as }\kappa\to\infty,
\]
where $Q^\pi$ is the fixed point of the unsmoothed Bellman operator $\mathcal T^\pi$.
Since $\mathcal B_{\vs}$ has surface measure zero, this convergence also holds for
surface-a.e.\ $\vc\in\Sphere^{m-1}$, and $Q_\kappa^\pi(\vs,C)\to Q^\pi(\vs,C)$
almost surely when $C\sim\pi(\cdot\mid\vs)$.
\end{restatable}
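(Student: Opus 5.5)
Throughout, $Q^\pi(\vs,\vc)$ is read as $Q^\pi(\vs,\aopt(\vs,\vc))$, the fixed point of the unsmoothed operator
\[
(\mathcal T^\pi Q)(\vs,\vc)=\E\!\left[r(\vs,\aopt(\vs,\vc))+\gamma Q(\vs',\vc')\right],
\qquad \vs'\sim P(\cdot\mid\vs,\aopt(\vs,\vc)),\ \vc'\sim\pi(\cdot\mid\vs').
\]
The plan is to compare the two fixed points through a perturbation decomposition. Since $Q_\kappa^\pi=\mathcal T_\kappa^\pi Q_\kappa^\pi$ and $Q^\pi=\mathcal T^\pi Q^\pi$, we write
\[
Q_\kappa^\pi-Q^\pi=(\mathcal T_\kappa^\pi Q_\kappa^\pi-\mathcal T_\kappa^\pi Q^\pi)+(\mathcal T_\kappa^\pi Q^\pi-\mathcal T^\pi Q^\pi).
\]
The first bracket is controlled by the $\gamma$-contraction of \cref{thm:smooth-fixed-point}. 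A sup-norm bound will not suffice, because near $\mathcal B_{\vs}$ the error does not vanish uniformly. I will therefore work pointwise and in expectation along trajectories, by unrolling the recursion.

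\textbf{Step 1: the kernel concentrates.} As $\kappa\to\infty$, $K_\kappa(\cdot\mid\vc)$ converges weakly to $\delta_{\vc}$. Quantitatively, for every $\epsilon>0$ we have $\sup_{\vc}K_\kappa(\{\tilc:\|\tilc-\vc\|>\epsilon\}\mid\vc)\to0$, which follows from the explicit density $\propto e^{\kappa\vc^\top\tilc}$. For $\vc\notin\mathcal B_{\vs}$ the minimizer is unique and $\mathcal A(\vs)$ is finite, so optimality is strict by a positive margin. Hence $\aopt(\vs,\cdot)$ is constant on a neighborhood of $\vc$, and $\mathcal B_{\vs}$ is closed, being a finite union of relatively closed sets where two linear objectives tie. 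It follows that $\mathbb P_{\tilc\sim K_\kappa(\cdot\mid\vc)}[\aopt(\vs,\tilc)\neq\aopt(\vs,\vc)]\to0$ for each fixed $\vc\notin\mathcal B_{\vs}$. Since $\mathcal B_{\vs}$ is a finite union of great-subsphere pieces $\{\vc^\top(\va-\va')=0\}$, it has surface measure zero.

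\textbf{Step 2: unroll both values as expectations over trajectories.} By iterating $\mathcal T_\kappa^\pi$, $Q_\kappa^\pi(\vs,\vc)$ equals the expected discounted return of a process in which each chosen direction is perturbed by the kernel before the solver is applied. At step $0$ the direction is $\tilc\sim K_\kappa(\cdot\mid\vc)$, and at each step $h\ge1$ it is $\tilc_h\sim K_\kappa(\cdot\mid\vc_h)$ with $\vc_h\sim\pi(\cdot\mid\vs_h)$. The unsmoothed $Q^\pi$ is the same process without the perturbations. Both processes live on a common probability space, coupled by sharing $\vc_h$ and the transition noise.

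Truncate at a horizon $H$ with $\gamma^H R_{\max}/(1-\gamma)<\epsilon$. Under the coupling, the two trajectories coincide up to time $H$ unless, for some $h\le H$, $\aopt(\vs_h,\tilc_h)\neq\aopt(\vs_h,\vc_h)$. So it suffices to show that the probability of such a mismatch goes to $0$ at each $h$. At $h=0$ this is Step 1. At $h\ge1$, conditional on $\vs_h$ (reachable) the direction $\vc_h$ lies outside $\mathcal B_{\vs_h}$ almost surely by assumption. Step 1 gives conditional convergence to $0$, and dominated convergence handles the average over $\vs_h$ and $\vc_h$. A union bound over $h\le H$ then gives
\[
|Q_\kappa^\pi(\vs,\vc)-Q^\pi(\vs,\vc)|\le 2\epsilon+\frac{2R_{\max}}{1-\gamma}\sum_{h\le H}\mathbb P(\text{mismatch at }h)\to 2\epsilon .
\]

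\textbf{Main obstacle.} The induction must be arranged so that, conditionally on a shared history, the two trajectories have identical law of $\vs_h$. This holds as long as no mismatch has occurred, which is why I use the coupling rather than operator norms. The remaining subtlety is measurability of the mismatch events, which follows from Assumption~\ref{assump:bounded}.

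\textbf{Final claims.} The a.e.\ statement follows from the measure-zero property of $\mathcal B_{\vs}$ established in Step 1. The almost-sure statement for $C\sim\pi(\cdot\mid\vs)$ follows from the assumption $\pi(\mathcal B_{\vs}\mid\vs)=0$.
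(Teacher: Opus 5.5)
\textbf{The gap is in Step~2: the unrolled smoothed process is misidentified.} Your overall architecture matches the paper's: local constancy of $\aopt(\vs,\cdot)$ off $\mathcal B_{\vs}$ via a positive margin, a coupling of the smoothed and unsmoothed processes, truncation at a finite horizon, a union bound over mismatch times, and dominated convergence using $\pi(\mathcal B_{\vs}\mid\vs)=0$.

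Unrolling $Q_\kappa^\pi=\mathcal T_\kappa^\pi Q_\kappa^\pi$, the bootstrap term is $Q_\kappa^\pi(\vs',\tilde\vc')$ with $\tilde\vc'\sim K_\kappa(\cdot\mid\vc')$. Expanding it once more applies $\mathcal T_\kappa^\pi$ at the \emph{already perturbed} direction $\tilde\vc'$, which perturbs it a second time before the solver is called. So at every step $h\ge1$ the solver acts on a direction drawn from the two-step kernel
\[
L_\kappa(\cdot\mid\vc_h)=\int_{\Sphere^{m-1}}K_\kappa(\cdot\mid\vx)\,K_\kappa(\dd\vx\mid\vc_h),
\]
not from $K_\kappa(\cdot\mid\vc_h)$. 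Only your description of step $h=0$ is right. The single-perturbation process you describe is the fixed point of a different operator, one that bootstraps at the unperturbed $\vc'$. As written, your argument therefore proves convergence of the wrong quantity.

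\textbf{The repair is short but needed.} You need concentration of $L_\kappa$, uniformly in the center. Condition on the intermediate point $X\sim K_\kappa(\cdot\mid\vc)$ and use $\|Y-\vc\|_2\le\|Y-X\|_2+\|X-\vc\|_2$ with a union bound. This gives
\[
\mathbb P_{Y\sim L_\kappa(\cdot\mid\vc)}(\|Y-\vc\|_2\ge r)\le 2\sup_{\vx}K_\kappa(\{\tilc:\|\tilc-\vx\|_2\ge r/2\}\mid\vx),
\]
which tends to $0$ by the uniform statement in your Step~1. Uniformity over the center is exactly what matters here, because the intermediate point is random.

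With this in hand, the probability of a first mismatch at step $h\ge1$ is at most $\E[f_\kappa(\vs_h^0,\vc_h^0)]$, where $f_\kappa(\vs,\vc)=\mathbb P_{Y\sim L_\kappa(\cdot\mid\vc)}(\aopt(\vs,Y)\neq\aopt(\vs,\vc))$. You should state explicitly that this expectation is over the \emph{unsmoothed} marginal of $(\vs_h^0,\vc_h^0)$, which does not depend on $\kappa$. That independence is what licenses dominated convergence. It is available precisely because, on the event of no earlier mismatch, the coupled trajectories share state and center. With these two corrections, the rest of your proof goes through as in the paper.
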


\begin{remark}
A global sup-norm bound $\|Q_\kappa^\pi-Q^\pi\|_\infty\to 0$ is not achievable in
general: by \cref{thm:smooth-fixed-point}, $Q_\kappa^\pi(s,\cdot)\in C^\infty(\mathbb S^{m-1})$
for every $\kappa>0$, whereas $Q^\pi(s,\cdot)$ is typically piecewise constant
with jumps across $\mathcal B_s$, so a uniform limit cannot recover $Q^\pi$;
almost-everywhere convergence is therefore the appropriate notion here.
The boundary-mass assumption in \cref{thm:consistency} is mild: it holds whenever
$\pi(\cdot\mid s)$ admits a density with respect to surface measure on $\mathbb S^{m-1}$,
which is the case for the spherical flow policy used throughout this paper. The convergence can be made quantitative under additional regularity of $\pi$ near $\mathcal B_s$; we leave a precise rate to future work.


\end{remark}

\subsection{Practical Implementation}
\label{sec:practical}

We now summarize the training loop that instantiates the flow-policy update and the smoothed Bellman operator from Section~\ref{sec:smoothing-fixedpoint}; full pseudocode is provided in \cref{alg:md-rfm-single-critic}.

\paragraph{Data collection.}
At each training step, the policy samples a cost direction $\vc\sim\pi(\cdot\mid\vs)$ and applies an on-sphere perturbation $\tilde\vc\sim K_\kappa(\cdot\mid\vc)$ before solving. We execute
$\va=\aopt(\vs, \tilde\vc)$ in the environment and store the \emph{center} direction together with the
transition, i.e., $(\vs,\vc,r,\vs')$. This matches the smoothing in~\cref{eq:Tkappa}.

\paragraph{Critic update.}
Given a replayed transition $(\vs,\vc,r,\vs')$, we sample $\vc'\sim\pi_{\vtheta}(\cdot\mid \vs')$
and draw $\tilde\vc'^{(j)}\sim K_\kappa(\cdot\mid \vc')$ for $J$ perturbed directions, then form the smoothed
target
\begin{equation}
\label{eq:ykappa-practical}
\textstyle
y_\kappa \gets r + \gamma \frac{1}{J}\sum_{j=1}^{J}\widetilde Q_{\bar\phi}(\vs',\tilde\vc'^{(j)}) \ .
\end{equation}
We fit the critic by minimizing the squared Bellman error
$\min_{\phi}\, \E\!\left[\big(\widetilde Q_{\phi}(\vs,\vc)-y_\kappa\big)^2\right]$.
By construction,~\cref{eq:ykappa-practical} is a Monte Carlo estimate of the smoothed backup in
\cref{eq:Tkappa}, yielding targets that vary smoothly with $\vc$ and mitigating solver-induced
discontinuities.

\paragraph{Policy update.}
We update the spherical flow policy using the reweighted flow-matching objective in~\cref{eq:rfm-md}, which shifts probability mass toward cost directions that induce higher-value solver outputs.

\section{Related Works}

\begin{table*}[t]
\centering
\small
\setlength{\tabcolsep}{5pt}
\renewcommand{\arraystretch}{1.15}
\begin{tabular}{lcccccc}
\toprule
\textbf{Method} &
\multicolumn{4}{c}{\textbf{Reward} ($\uparrow$)} &
\textbf{Avg.} ($\uparrow$) &
\textbf{Avg. Time (h)} ($\downarrow$) \\
\cmidrule(lr){2-5}
& \textbf{Dyn. Sched.} & \textbf{Dyn. Routing} & \textbf{Dyn. Assign.} & \textbf{Dyn. Interv.}
& \multicolumn{1}{c}{} & \multicolumn{1}{c}{} \\
\midrule
Random
& 10.12\std{0.84} & 11.11\std{0.36} & 13.28\std{0.80} & 8.91\std{0.08} & 10.85 & -- \\
Greedy
& 15.12\std{1.56} & 11.58\std{0.46} & 20.15\std{1.00} & 10.55\std{0.67} & 14.35 & -- \\
DQN-Sampling~\citep{he2016deep}
& 16.89\std{0.95} & 18.29\std{1.49} & 22.25\std{2.27} & 15.16\std{0.25} & 18.15 & 0.52 \\
SEQUOIA~\citep{xu2025reinforcement}
& 24.00\std{1.44} & 20.99\std{1.92} & 32.99\std{3.45} & 10.84\std{0.89} & 22.21 & 9.11 \\
SRL~\citep{hoppe2025structured}
& 24.50\std{1.09} & 25.32\std{1.49} & 28.46\std{0.82} & 13.36\std{1.93} & 22.91 & 3.89 \\
\midrule
\textbf{Ours}
& \textbf{28.85}\std{1.48} & \textbf{28.51}\std{2.60} & \textbf{35.93}\std{2.71} & \textbf{17.21}\std{0.39} & \textbf{27.62} & 1.29 \\
\bottomrule
\end{tabular}
\caption{Rewards across tasks (higher is better). Average training time is reported in hours (lower is better).}
\label{tab:main_results}
\end{table*}

\paragraph{Diffusion/flow policies for RL.}
Diffusion~\citep{sohl2015deep,ho2020denoising} and flow-based generative models~\citep{lipman2023flow,albergo2023building} enable \emph{expressive stochastic policies} that capture multi-modal action distributions beyond Gaussian actors. Early work focused on \emph{offline RL}, using conservative, behavior-regularized training to stay within the data support, as in Diffusion Q-learning~\citep{wang2023diffusion} and sampling-efficient variants~\citep{kang2023efficient}. Recent extensions~\citep{choi2026review} bring diffusion policies to \emph{online} RL via environment interaction, including action-gradient methods~\citep{yang2023policy,psenka2024learning,li2024learning}, value-/advantage-weighted updates~\citep{ding2024diffusionbased,ma2025efficient}, proximity-based regularization~\citep{ding2025genpo}, and backpropagation through multi-step generative rollouts~\citep{wang2024diffusion,celik2025dime}. In parallel, \emph{flow-matching} policies offer faster sampling and have been linked to RL objectives in both offline~\citep{zhang2025energyweighted,park2025flow} and online regimes~\citep{zhang2025reinflow,mcallister2025flow}. These approaches work well in continuous control, but they do not enforce discrete feasibility constraints and thus do not directly apply to structured combinatorial action spaces.

Note that concurrent work~\citep{ma2025reinforcement} has begun exploring \emph{discrete} diffusion policies for RL with discrete action spaces. However, this method targets standard discrete actions rather than  \emph{combinatorial} actions, and therefore does not directly enforce hard feasibility constraints.

\paragraph{RL with combinatorial actions.}
For large discrete action spaces, prior work uses action embeddings and approximate maximization to reduce action-selection cost~\citep{dulac2015deep}, but it becomes impractical when the feasible set is exponential and does not enforce hard constraints. In combinatorial domains, \citet{he2016deep} instead sample a fixed set of feasible candidate actions and use the critic to pick the best as a proxy for greedy maximization. Related theory provides tractable guarantees under restricted function classes (e.g., linear approximation)~\citep{tkachuk2023efficient}, which may not extend to expressive nonlinear critics.

More recent methods couple RL with combinatorial optimization solvers to enforce feasibility by design.
A prominent line performs \emph{solver-based value optimization}, selecting actions by solving a mixed-integer program
(MIP) that maximizes a learned value over the feasible set: \citet{delarue2020reinforcement} formulate greedy
improvement as a MIP augmented with a learned value-to-go term, but focus on vehicle routing in deterministic MDPs;
\citet{xu2025reinforcement} propose SEQUOIA, which embeds a Q-network into a MIP to compute greedy actions in
stochastic sequential settings such as coupled restless bandits. While effective, these approaches typically require
the value function to be solver-embeddable, which can restrict the critic class. Another line integrates
optimization into the \emph{policy parameterization} itself: \citet{hoppe2025structured} propose Structured
Reinforcement Learning (SRL), an actor--critic framework that trains a solver-induced actor end-to-end via
stochastic perturbations and Fenchel--Young losses~\citep{blondel2020learning}. However, SRL yields deterministic
structured policies, which can limit expressiveness and exploration in complex combinatorial landscapes. Overall,
existing work highlights a central trade-off among feasibility guarantees, generalizability, and policy
expressiveness in RL with combinatorial actions.

\section{Experiments}

We evaluate \ours on a public benchmark and on a real-world sexually transmitted infection testing task.\footnote{Our code is available at \url{https://github.com/anagha-satish/combinatorial-diffusion}
} We also conduct ablation studies to validate key design choices.

\subsection{Public Benchmarks}
\label{sec:ben}

We first evaluate our method on the public benchmark suite of \citet{xu2025reinforcement}, which defines four sequential decision-making tasks with combinatorially constrained actions. In each task, the learner repeatedly selects a feasible structured action, observes reward feedback, and optimizes long-horizon return under stochastic dynamics: (1) \emph{Dynamic scheduling} repeatedly selects a limited set of service requests and assigns them to feasible time windows at each step, where completing (or delaying) service changes which requests remain and how rewards evolve over time;
(2) \emph{Dynamic routing} plans a bounded-length route on a graph (e.g., delivery), where the visited nodes determine both immediate reward and future states; (3) \emph{Dynamic assignment} repeatedly matches limited-capacity resources to incoming demands over time, where today’s allocation affects future availability and rewards; and (4) \emph{Dynamic intervention} selects a constrained subset of interventions to apply to a population at each step where intervening on an individual changes their future state and thus the rewards available in later rounds. Additional environment details are provided in \cref{app:benchmark:env}.

\begin{figure*}
    \centering
    \includegraphics[width=\linewidth]{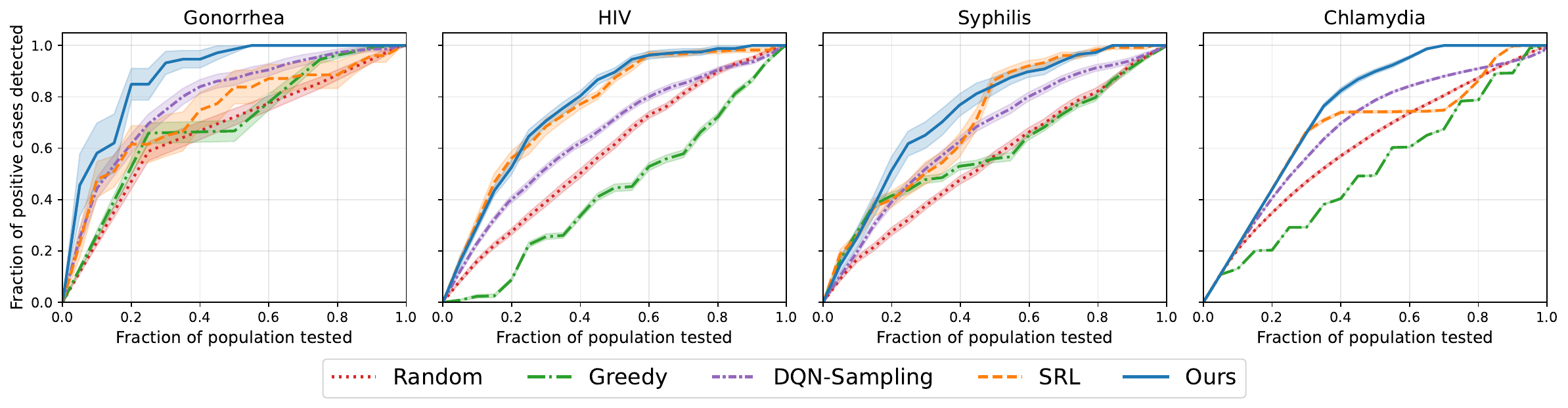}
    \caption{Performance on sexually transmitted infection testing}
    \label{fig:disease}
\end{figure*}

\paragraph{Experimental setup.}
We compare against two simple heuristics, \textsc{Random} and \textsc{Greedy}; a sampling-based baseline~\citep{he2016deep} that draws 100 feasible actions and selects the one with the highest critic value; and two state-of-the-art combinatorial RL methods: SEQUOIA~\citep{xu2025reinforcement}, which embeds a learned Q-network into a mixed-integer program (MIP) for greedy action selection, and SRL~\citep{hoppe2025structured}, which learns a deterministic structured policy.
For a fair comparison, all RL methods use the same critic-network architecture and the same number of warm-up steps, and are then trained until convergence.
At test time, we report average return over 50 evaluation episodes.
Additional baseline details and hyperparameters are provided in \cref{app:benchmark:env}.

\paragraph{Experimental results.}
\cref{tab:main_results} summarizes the performance of all methods. \ours achieves the highest reward on all four tasks, improving over the strongest baseline SRL by 20.6\% on average. We attribute these gains to the expressiveness of our \emph{stochastic} policy class. SRL learns a deterministic structured policy, which can limit representational capacity and reduce exploration, potentially leading to suboptimal local optima. In contrast, our approach uses flow matching to learn a rich state-conditional distribution in a continuous latent space, and the downstream solver maps each latent sample to a feasible structured action. This combination yields an expressive stochastic policy over the constrained combinatorial action set, effectively transferring the benefits of diffusion/flow policies to combinatorial RL.

In addition to stronger performance, \ours is also computationally efficient. Our approach is approximately 3.0$\times$ faster than SRL in average training time. This speedup arises because SRL differentiates through the combinatorial solver by sampling multiple perturbations of each cost vector, which requires solving a combinatorial optimization problem for every perturbed instance during training. In contrast, our method avoids differentiating through the solver: we train the flow policy in the latent cost space with a reweighted objective and learn the critic directly in the latent space, so each environment step requires only a single solver call. SEQUOIA is even slower, as it solves an embedded mixed-integer program whose number of decision variables grows with the network size, and it typically relies on ReLU-based MLP architectures. These design choices can be restrictive and may not scale well to other problem settings, as we show next.

\subsection{Sexually Transmitted Infection Testing}
We next evaluate \ours on a real-world \emph{sexually transmitted infection (STI)} testing task.
STI testing is a core public-health intervention, and the WHO recommends network-based strategies that leverage
contact networks to reach high-risk individuals~\citep{world2021consolidated}. Specifically, we model the population as a contact network, where nodes represent individuals and edges denote reported sexual interactions, and focus on screening for gonorrhea, HIV, syphilis, and chlamydia.
At each step, a health worker selects individuals to test. Testing a node reveals
its infection status and yields reward when a positive case is detected. Actions are further constrained by a
\emph{frontier} rule: tests can only be assigned to individuals adjacent to at least one previously tested node.
While network-based testing is recommended, few works provide concrete sequential decision rules; for example,
\citet{choo2025adaptive,kangaslahti2026policy} study a simplified setting that selects a single node at a time. In contrast, we study a
batch-testing variant that better reflects practice: the agent selects a budgeted \emph{set} of nodes
at each step, inducing a constrained combinatorial action space. This setting admits no known optimal solution
method, making it a natural testbed for evaluating RL-based approaches. Additional environment details are provided
in \cref{app:test:env}.

\paragraph{Experimental setup.}
We construct sexual interaction graphs from the de-identified public-use dataset released by ICPSR~\citep{morris2011hiv}.
Since this domain is graph-structured, we use a Graph Isomorphism Network (GIN) backbone~\citep{xu2018how} for the critic in all RL-based methods.
We exclude SEQUOIA from this experiment because it is restricted to MLP-based value networks, whereas this task
requires a graph neural network for effective representation learning.
Additional baseline details and hyperparameters are provided in \cref{app:test:implementation}.

\paragraph{Experimental results.}
We evaluate performance by the fraction of positive cases detected versus the fraction of the population tested.
\cref{fig:disease} reports results on four real disease networks (gonorrhea, HIV, syphilis, and chlamydia).
Overall, our method achieves higher detection efficiency: for a fixed testing budget, it identifies a larger fraction of positive cases, with the largest gains in the low-to-mid budget regime where targeted testing matters most.
In particular, \ours consistently improves early-stage detection over both \textsc{Random} and \textsc{Greedy} across all diseases.
We outperform SRL on chlamydia and gonorrhea and match it on HIV; on syphilis, we achieve stronger \emph{early} detection---often the most operationally valuable regime---while SRL becomes comparable only at larger testing fractions.
These results suggest that an expressive stochastic policy is especially effective for navigating the frontier constraint and prioritizing high-yield test candidates under limited testing budgets.

\subsection{Ablation Study}
\label{sec:ablation}

\begin{figure}
    \centering
    \includegraphics[width=\linewidth]{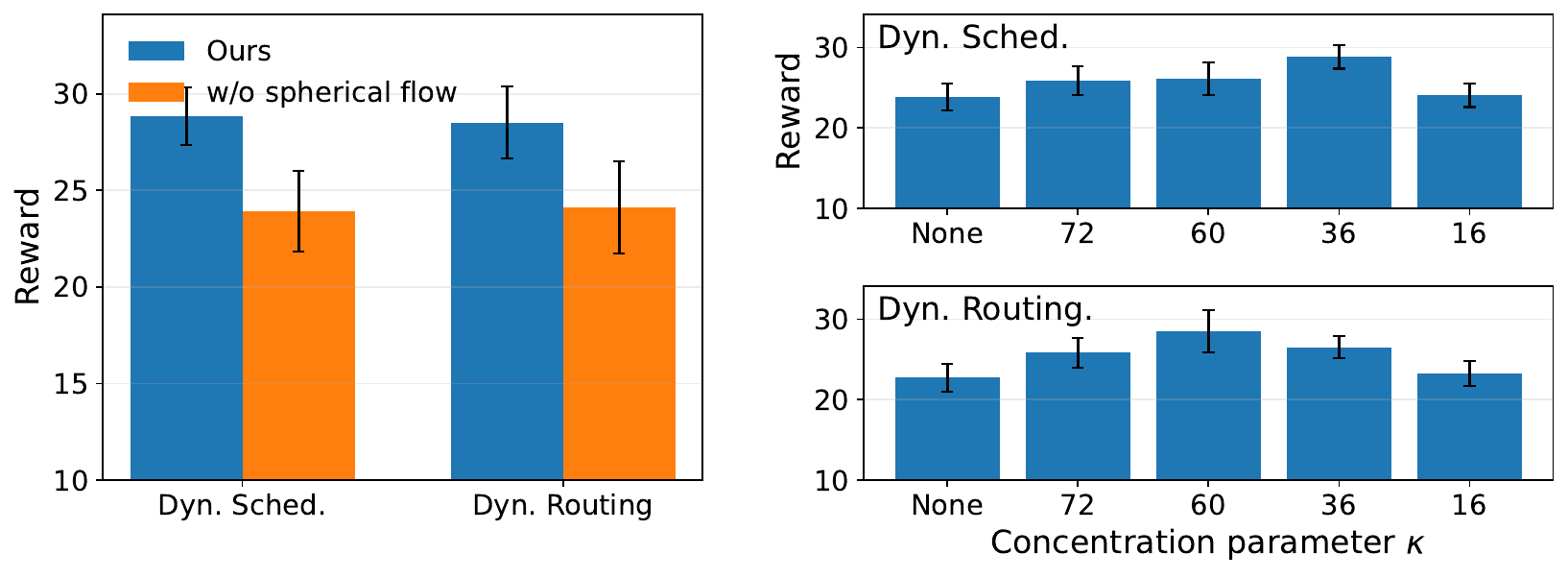}
    \caption{Ablation study. \textbf{Left:} effect of training the flow directly on the sphere.
\textbf{Right:} effect of vMF smoothing. Here $\kappa$ is the concentration parameter:
larger $\kappa$ yields weaker smoothing, while smaller $\kappa$
yields stronger smoothing. \texttt{None} denotes no smoothing.
}
    \label{fig:ablation}
\end{figure}

\paragraph{Spherical flow.}
We ablate learning the cost-direction distribution \emph{on the sphere} versus in unconstrained
Euclidean space. Specifically, we replace spherical flow matching with an Euclidean flow
model~\citep{lipman2023flow} that outputs cost vectors in $\mathbb{R}^m$ (same architecture and
weighted objective) and map them to actions with the same solver. The only difference is geometry: “w/o spherical flow” ignores the spherical constraint and therefore must learn both magnitude and direction, even though the solver depends only on direction. For comparable smoothing, we replace
the vMF kernel on $\Sphere^{m-1}$ with a Gaussian kernel in $\mathbb{R}^m$. Fig.~\ref{fig:ablation}
(left) shows consistent drops on Dynamic Scheduling and Dynamic Routing, highlighting the value of
respecting the solver-induced spherical geometry.

\paragraph{Smoothed Bellman update.}
We next ablate our smoothed Bellman operator, which stabilizes value learning under solver-induced discontinuities.
We sweep the smoothing strength $\kappa$ (with \texttt{None} denoting no smoothing) and report performance in Fig.~\ref{fig:ablation} (right).
Overall, moderate smoothing yields the best returns, while too little or too much smoothing can hurt: weak (or no) smoothing leads to noisy, unstable critic targets, whereas overly strong smoothing oversmooths the value landscape and introduces bias.

\paragraph{Latent-space critic.}
Finally, we evaluate the training-time savings from learning the critic in the latent cost space. As a baseline, we instead train a critic in the action space and use it to update the policy via~\cref{eq:rfm-md}. Since the action-space critic makes training prohibitively slow, we do not run it to completion; instead, we compare wall-clock time per policy update. Evaluated on the Dynamic Scheduling task, one policy training step takes 
around 6 minutes with the action-space critic, versus under 1 second with the cost-space critic. This highlights the necessity of training the critic in the cost space, greatly reducing the number of solver calls.

\section{Conclusion and Limitations}

We introduced \ours, a latent spherical flow policy for RL with combinatorial actions. Our key idea is to shift feasibility enforcement from the policy network to a combinatorial optimization solver, while learning an expressive stochastic policy in a continuous latent space of cost directions. To make learning practical, we amortize solver usage by training both the critic and actor directly
in this latent space via a weighted flow-matching objective, and we further stabilize critic learning under solver-induced discontinuities with a smoothed Bellman update. Across combinatorial RL benchmarks and a real-world sexually transmitted infection testing task, \ours achieves strong performance and improved efficiency over prior combinatorial RL methods, suggesting that flow-based generative policies offer a principled and scalable route to RL with combinatorial actions.

\paragraph{Limitations}
We identify several promising directions for future work. First, smoothing introduces a bias--variance trade-off; developing adaptive or state-dependent smoothing schedules could further improve stability without oversmoothing. Second, invoking the solver at inference time incurs additional computational overhead, which remains a common and open challenge for state-of-the-art combinatorial RL methods. Combining our latent-space training with learned guidance~\cite{khalil2016learning,li2023learning} could mitigate this cost, further reducing runtime and enabling deployment on larger-scale, real-world constrained decision problems.
Third, we optimize the policy by value-weighted flow matching rather than by directly maximizing
the critic through the flow sampler; the latter is an appealing alternative but, in our
preliminary attempts, backpropagating through the sampler destabilized training in the
combinatorial setting. Designing a stable direct value-maximization scheme for this setting is
left to future work.

\section*{Acknowledgments}
We thank the anonymous reviewers for their valuable feedback. L.K. would like to thank Aaron Ferber for helpful discussions. This work was supported by ONR MURI N00014-24-1-2742.

\section*{Impact Statement}

This work advances reinforcement learning for combinatorial action spaces by combining expressive stochastic generative policies with feasibility guarantees from combinatorial optimization, which could improve decision-making in constraint-heavy domains. Potential risks arise when objectives or constraints are misspecified, potentially leading to unintended or inequitable outcomes. Any real-world use should therefore include careful objective and constraint design (including equity and safety considerations), as well as rigorous auditing under distribution shift and across subpopulations.


\bibliography{references}
\bibliographystyle{icml2026}

\newpage
\appendix
\onecolumn

\begin{center}
	{\Large \textbf{Appendix for Latent Spherical Flow Policy for Combinatorial Reinforcement Learning}}
\end{center}

\startcontents[sections]
\printcontents[sections]{l}{1}{\setcounter{tocdepth}{2}}

\section{Limitations}

There are several promising directions for future work. First, while we focus on linear-objective solvers, it would be valuable to extend the framework to richer solver interfaces (e.g., parametric constraints, multi-objective solvers, or robust formulations) while preserving feasibility guarantees. Second, smoothing introduces a bias--variance trade-off; developing adaptive or state-dependent smoothing schedules could further improve stability without oversmoothing. Finally, combining our latent-space training with solver warm-starting or learned heuristics may further reduce runtime and enable deployment on larger-scale, real-world constrained decision problems.

\section{Training Algorithm}
We provide the full pseudocode of our method in \cref{alg:md-rfm-single-critic}.

\begin{algorithm}[ht]
\caption{Latent Spherical Flow Policy Training}
\label{alg:md-rfm-single-critic}
\begin{algorithmic}[1]
\STATE {\bfseries Require:} temperature $\lambda$; vMF concentration $\kappa$; discount $\gamma$;
smoothing size $J$; target update rate $\eta$; solver $\aopt(\cdot)$
\STATE {\bfseries Initialize:} replay buffer $\mathcal{D}\gets \emptyset$; actor $v_{\vtheta}$; critic $\widetilde Q_{\phi}$; target $\bar\phi$

\FOR{iteration $k=1,2,\dots$}
    \STATE // \textit{Environment step}
  \STATE Set $\vs \gets \vs'$
    \STATE Sample cost $\vc \sim \pi_{\vtheta}(\cdot\mid \vs)$ by integrating Eq.~\eqref{eq:sphere-ode}
    \STATE Sample $\tilde\vc \sim K_\kappa(\cdot\mid \vc)$ \hfill (vMF perturbation)
    \STATE Execute action $\va \gets \aopt(\vs,\tilde\vc)$; observe reward $r$ and next state $\vs'$
    \STATE Store transition $(\vs,\vc,r,\vs')$ into $\mathcal{D}$

    \STATE // \textit{Gradient update}
    \STATE Sample minibatch $\{(\vs,\vc,r,\vs')\} \subset \mathcal{D}$

\STATE {\bfseries Critic target:}
\STATE Sample center $\vc' \sim \pi_{\vtheta}(\cdot\mid \vs')$
\STATE Sample perturbations $\{\tilde\vc'^{(j)}\}_{j=1}^{J} \sim K_\kappa(\cdot\mid \vc')$
\STATE $y_\kappa \gets r + \gamma \frac{1}{J}\sum_{j=1}^{J}\widetilde Q_{\bar\phi}(\vs',\tilde\vc'^{(j)})$
\STATE {\bfseries Critic update:} Minimize $\|\widetilde Q_{\phi}(\vs,\vc)-y_\kappa\|_2^2$

    \STATE {\bfseries Actor update:}
    \STATE Sample states $\{\vs\}$ from $\mathcal{D}$ and draw $\vc_1 \sim \pi_{\vtheta}(\cdot\mid \vs)$
    \STATE Compute weight $w(\vs,\vc_1) \gets \exp\!\Big(\frac{1}{\lambda}\widetilde Q_{\bar\phi}(\vs,\vc_1)\Big)$
    \STATE Update $\vtheta$ by minimizing weighted spherical flow matching objective in \cref{eq:rfm-md}

    \STATE {\bfseries Target soft update:}
    \STATE $\bar\phi \gets (1-\eta) \bar\phi + \eta\phi$
\ENDFOR
\end{algorithmic}
\end{algorithm}

\section{Additional Related Work}

\textbf{Combinatorial Optimization Solver-Augmented Neural Networks.}
Our work relates to solver-augmented learning, which combines neural networks with \emph{combinatorial optimization (CO) solvers} to ensure predictions satisfy discrete feasibility constraints~\citep{schiffer2026combinatorial}. The main challenge is end-to-end training through the solver’s discrete, non-differentiable argmin. Existing approaches address this using surrogate gradients for black-box solvers~\citep{vlastelica2020differentiation}, perturbation-based smoothing with implicit differentiation~\citep{berthet2020learning}, and MIP-specific methods that leverage solver sensitivity information~\citep{ferber2020mipaal}. Relatedly, decision-focused learning optimizes prediction models for downstream CO decision quality by placing the CO problem inside the training objective~\citep{wilder2019melding,elmachtoub2022smart}. These methods primarily target supervised or one-shot decisions, rather than sequential policy learning. \citet{ferber2024genco} further embeds CO solvers into generative models (GANs~\citep{goodfellow2014generative}, VAEs~\citep{kingma2013auto}) for static structured generation. In contrast, we study \emph{sequential} decision making with an \emph{iteratively} trained generative policy via flow matching, where repeated sampling and solver calls arise during policy evaluation and improvement.

\section{Background on Spherical Flow Matching}
\label{app:spherical-flow}

We  provide details on spherical flow matching used to parameterize the cost-space
policy $\pi_\theta(\vc\mid\vs)$.

\paragraph{Background.}
Flow matching learns a continuous-time generative model by regressing a parametric vector field
toward a known target velocity along paths connecting a simple base distribution to the data
distribution~\citep{lipman2023flow}. In our setting, cost vectors lie on the unit sphere
$\Sphere^{m-1}$, which requires respecting the underlying Riemannian geometry.

\paragraph{Geodesic interpolation on the sphere.}
Let $\vc_0,\vc_1\in\Sphere^{m-1}$ denote a base sample and a target sample, respectively. We define
the interpolation $\{\vc_t\}_{t\in[0,1]}$ as the shortest geodesic on the sphere connecting $\vc_0$
and $\vc_1$. Let $\theta=\arccos(\vc_0^\top\vc_1)$ denote the geodesic distance. The interpolation is
given by
\begin{equation}
\label{eq:geodesic}
\vc_t
=
\frac{\sin((1-t)\theta)}{\sin\theta}\,\vc_0
+
\frac{\sin(t\theta)}{\sin\theta}\,\vc_1,
\qquad t\in[0,1].
\end{equation}
This construction ensures $\|\vc_t\|_2=1$ for all $t$.

\paragraph{Target tangent velocity.}
Differentiating~\cref{eq:geodesic} with respect to $t$ yields a velocity vector that lies in the
tangent space of the sphere at $\vc_t$. We denote this target tangent velocity by
$u(\vc_t,\vs,t)$.
In practice, $u(\vc_t,\vs,t)$ depends only on $(\vc_0,\vc_1,t)$ and is independent of the policy
parameters; see~\citet{chen2024flow} for a closed-form expression.

\paragraph{Projected vector field parameterization.}
We parameterize the policy using a time-dependent vector field
$v_\theta(\vc,\vs,t)\in\mathbb{R}^m$.
To ensure that the induced flow remains on the sphere, we project the vector field onto the tangent
space at $\vc$:
\begin{equation}
\label{eq:projection}
\Pi_{\vc}
=
\mathbf{I}-\vc\vc^\top.
\end{equation}
The resulting ODE,
\[
\frac{d\vc_t}{dt}=\Pi_{\vc_t}v_\theta(\vc_t,\vs,t),
\]
preserves the unit-norm constraint by construction.

\paragraph{Sampling.}
After training, samples $\vc_1\sim\pi_\theta(\cdot\mid\vs)$ are obtained by integrating the ODE from
$t=0$ to $t=1$ with initial condition $\vc_0\sim p_0$, where $p_0$ is chosen to be the uniform
distribution on $\Sphere^{m-1}$ in our implementation.

\section{Proof of \cref{lem:scale-invariance}}
\lemScaleInvariance*
\begin{proof}
For any $\va\in\mathcal{A}$, $(\alpha\vc)^\top \va=\alpha(\vc^\top \va)$. Since $\alpha>0$ preserves
ordering, $\vc^\top \va_1 \le \vc^\top \va_2$ if and only if $(\alpha\vc)^\top \va_1 \le
(\alpha\vc)^\top \va_2$. Therefore, the minimizer sets coincide.
\end{proof}

\section{Proof of \cref{prop:expressivity}}
\propExpressivity*
\begin{proof}
Fix a state $\vs$ and write $\mathcal{A}:=\mathcal{A}(\vs)$. Without loss of generality, we represent each feasible action as a binary decision vector
$\va\in\{0,1\}^m$, since any bounded integer decision can be encoded in binary~\citep{dantzig1963linear}.

For each $\va\in\mathcal{A}$ define the action region
\[
\mathcal{C}_{\va}:=\mathcal{C}_{\va}(\vs)=\{\vc\in\Sphere^{m-1}:\aopt(\vs,\vc)=\va\}.
\]
By the (deterministic) tie-breaking in $\aopt$, the sets $\{\mathcal{C}_{\va}\}_{\va\in\mathcal{A}}$
form a partition of $\Sphere^{m-1}$. Moreover, since $\mathcal{A}$ is finite and the objective is
linear in $\vc$, each $\mathcal{C}_{\va}$ is Borel measurable.

\paragraph{Step 1: $\sigma(\mathcal{C}_{\va})>0$ for all $\va\in\mathcal{A}$.}
Fix any $\va\in\mathcal{A}$ and let $P:=\mathrm{conv}(\mathcal{A})$.
We claim that $\va$ is a vertex of $P$. Indeed, suppose
$\va=\sum_{i=1}^k \lambda_i \va_i$ for some $\va_i\in\mathcal{A}$, $\lambda_i>0$, $\sum_i\lambda_i=1$.
For any vector $x\in\mathbb{R}^m$, let $x_j$ denote its $j$-th coordinate; in particular, $\va_j$ and
$(\va_i)_j$ denote the $j$-th coordinates of $\va$ and $\va_i$, respectively. Since every coordinate is
binary, for each coordinate $j$:
if $\va_j=1$ then $1=\sum_i \lambda_i (\va_i)_j$ forces $(\va_i)_j=1$ for all $i$, and if $\va_j=0$ then
$0=\sum_i \lambda_i (\va_i)_j$ forces $(\va_i)_j=0$ for all $i$. Hence all $\va_i=\va$, so $\va$ is an
extreme point (vertex) of $P$.

Since $P$ is a polytope and $\va$ is a vertex, $\va$ is an exposed point; by the separating hyperplane theorem \cite{boyd2004convex} there exists a cost vector $\vc_0\in\mathbb{R}^m$ such that $\va$ is the \emph{unique}
minimizer of $\vc_0^\top x$ over $x\in P$ (equivalently, of $\vc_0^\top \va'$ over
$\va'\in\mathcal{A}$).
Define the strict-optimality cone
\[
K_{\va}:=\bigl\{\vc\in\mathbb{R}^m : \vc^\top \va < \vc^\top \va' \text{ for all } \va'\in\mathcal{A}\setminus\{\va\}\bigr\}.
\]
Then $\vc_0\in K_{\va}$, and $K_{\va}$ is an open cone (a finite intersection of open halfspaces).
In particular, $K_{\va}\cap \Sphere^{m-1}$ is a nonempty open subset of $\Sphere^{m-1}$ and it is
contained in $\mathcal{C}_{\va}$. Therefore,
\[
\sigma(\mathcal{C}_{\va})\;\ge\;\sigma\!\bigl(K_{\va}\cap \Sphere^{m-1}\bigr)\;>\;0.
\]

\paragraph{Step 2: Mixture over regions matches any target $\mu$.}

For each $\va\in\mathcal{A}$, define a probability measure $\pi_{\va}$ on $\Sphere^{m-1}$ by
normalizing surface measure on $\mathcal{C}_{\va}$:
\[
\pi_{\va}(B)
:=
\frac{\sigma\!\big(B\cap \mathcal{C}_{\va}\big)}{\sigma(\mathcal{C}_{\va})},
\qquad \text{for any measurable } B\subseteq \Sphere^{m-1}.
\]
This is well-defined by Step~1.

Now define the mixture distribution
\[
\pi(\cdot):=\sum_{\va\in\mathcal{A}} \mu(\va)\,\pi_{\va}(\cdot).
\]
Let $\vc\sim\pi$. Then, for any $\va\in\mathcal{A}$,
\[
\mathbb{P}\!\left[\aopt(\vs,\vc)=\va\right]
=\mathbb{P}\!\left[\vc\in \mathcal{C}_{\va}\right]
=\pi(\mathcal{C}_{\va})
=\sum_{\va'\in\mathcal{A}} \mu(\va')\,\pi_{\va'}(\mathcal{C}_{\va})
=\mu(\va),
\]
since $\pi_{\va'}(\mathcal{C}_{\va})=\mathbbm{1}\{\va'=\va\}$.
\end{proof}

\section{Proof of \cref{thm:smooth-fixed-point}}

\thmSmoothFixedPoint*

\begin{proof}[Proof of Theorem~\ref{thm:smooth-fixed-point}]
Recall the smoothed Bellman operator in~\cref{eq:Tkappa}:
\[
(\mathcal T_\kappa^\pi Q)(\vs,\vc)
=\E\!\left[r\big(\vs,\aopt(\vs,\tilc)\big)+\gamma\,Q(\vs',\tilde\vc')\right],
\]
where $\tilc\sim K_\kappa(\cdot\mid\vc)$, $\vs'\sim P(\cdot\mid\vs,\aopt(\vs,\tilc))$,
$\vc'\sim\pi(\cdot\mid\vs')$, and $\tilde\vc'\sim K_\kappa(\cdot\mid\vc')$.

\paragraph{(1) $\gamma$-contraction and uniqueness.}

\emph{Self-map.} For any $Q\in\mathcal B_\infty$ and any $(\vs,\vc)$,
\[
|(\mathcal T_\kappa^\pi Q)(\vs,\vc)|
\le \E\!\left[|r(\vs,\aopt(\vs,\tilc))|\right]+\gamma\,\E\!\left[|Q(\vs',\tilde\vc')|\right]
\le \|r\|_\infty+\gamma\|Q\|_\infty,
\]
so $\mathcal T_\kappa^\pi Q$ is bounded and hence $\mathcal T_\kappa^\pi:\mathcal B_\infty\to\mathcal B_\infty$.

\emph{Completeness.} Equipped with the sup norm, $(\mathcal B_\infty,\|\cdot\|_\infty)$ is a Banach space.

Let $Q_1,Q_2\in\mathcal B_\infty$. The reward terms cancel, so
\[
(\mathcal T_\kappa^\pi Q_1)(\vs,\vc)-(\mathcal T_\kappa^\pi Q_2)(\vs,\vc)
=\gamma\,\E\!\left[ Q_1(\vs',\tilde\vc')-Q_2(\vs',\tilde\vc') \right].
\]
Taking absolute values and using the definition of $\|\cdot\|_\infty$,
\[
\big|(\mathcal T_\kappa^\pi Q_1)(\vs,\vc)-(\mathcal T_\kappa^\pi Q_2)(\vs,\vc)\big|
\le \gamma\,\|Q_1-Q_2\|_\infty.
\]
Taking the supremum over $(\vs,\vc)$ yields
\[
\|\mathcal T_\kappa^\pi Q_1-\mathcal T_\kappa^\pi Q_2\|_\infty
\le \gamma\,\|Q_1-Q_2\|_\infty,
\]
so $\mathcal T_\kappa^\pi$ is a $\gamma$-contraction on $(\mathcal B_\infty,\|\cdot\|_\infty)$.
Since $0\le\gamma<1$, the Banach fixed-point theorem implies that $\mathcal T_\kappa^\pi$ admits a
unique fixed point $Q_\kappa^\pi\in\mathcal B_\infty$.

\paragraph{(2) $C^\infty$ smoothing in $\vc$.}
Fix any $\vs$ and any bounded $Q\in\mathcal B_\infty$. Define the bounded measurable function
\[
G_{\vs}(\tilc)
:=\E\!\left[r\big(\vs,\aopt(\vs,\tilc)\big)+\gamma\,Q(\vs',\tilde\vc') \,\middle|\, \tilc \right],
\]
where the conditional expectation is over $\vs'\sim P(\cdot\mid\vs,\aopt(\vs,\tilc))$,
$\vc'\sim\pi(\cdot\mid\vs')$, and $\tilde\vc'\sim K_\kappa(\cdot\mid\vc')$.
By Assumption~\ref{assump:bounded}, $r$ and $Q$ are bounded, hence
\[
|G_{\vs}(\tilc)|\le \|r\|_\infty+\gamma\|Q\|_\infty
\quad \text{for all } \tilc\in\Sphere^{m-1}.
\]

Using $\tilc\sim K_\kappa(\cdot\mid\vc)$, we rewrite the operator as
\begin{equation}
\label{eq:Tkappa-integral}
(\mathcal T_\kappa^\pi Q)(\vs,\vc)
=\int_{\Sphere^{m-1}} G_{\vs}(\tilc)\,K_\kappa(\tilc\mid\vc)\,d\sigma(\tilc),
\end{equation}
where $d\sigma$ is the surface measure on $\Sphere^{m-1}$.
By the vMF form in~\cref{eq:vmf}, the kernel $K_\kappa(\tilc\mid\vc)$ is $C^\infty$ in $\vc$ for
every fixed $\tilc$. Moreover, since $\Sphere^{m-1}\times\Sphere^{m-1}$ is compact, all derivatives
of $K_\kappa(\tilc\mid\vc)$ with respect to $\vc$ are bounded uniformly over $(\tilc,\vc)$.

Therefore, fix any smooth chart $\varphi:U\subset\Sphere^{m-1}\to V\subset\R^{m-1}$ and write
$\vc=\varphi^{-1}(z)$ for $z\in V$. For any multi-index $\alpha$, consider the partial derivative
$\partial_z^\alpha$ of the coordinate representation
\[
F(z):=(\mathcal T_\kappa^\pi Q)\big(\vs,\varphi^{-1}(z)\big)
=\int_{\Sphere^{m-1}} G_{\vs}(\tilc)\,K_\kappa\big(\tilc\mid \varphi^{-1}(z)\big)\,d\sigma(\tilc).
\]
Since $K_\kappa(\tilc\mid\vc)$ is $C^\infty$ in $\vc$ and $\varphi^{-1}$ is smooth, the map
$z\mapsto K_\kappa\big(\tilc\mid \varphi^{-1}(z)\big)$ is $C^\infty$ on $V$ for each fixed $\tilc$.
Moreover, because $\Sphere^{m-1}\times\Sphere^{m-1}$ is compact, all derivatives
$\partial_z^\alpha K_\kappa\big(\tilc\mid \varphi^{-1}(z)\big)$ are bounded uniformly over
$\tilc\in\Sphere^{m-1}$ and $z$ in compact subsets of $V$. Hence, using the bound on $G_{\vs}$
and dominated convergence, we may differentiate under the integral sign to obtain
\[
\partial_z^\alpha F(z)
=\int_{\Sphere^{m-1}} G_{\vs}(\tilc)\,\partial_z^\alpha K_\kappa\big(\tilc\mid \varphi^{-1}(z)\big)\,d\sigma(\tilc),
\]
and the right-hand side is continuous in $z$. Since this holds for all charts $\varphi$ and all
multi-indices $\alpha$, we conclude that $(\mathcal T_\kappa^\pi Q)(\vs,\cdot)\in C^\infty(\Sphere^{m-1})$.

\paragraph{(3) Regularity of the fixed point.}
From Part~(1), the unique fixed point satisfies $Q_\kappa^\pi=\mathcal T_\kappa^\pi Q_\kappa^\pi$
and $Q_\kappa^\pi\in\mathcal B_\infty$. Applying Part~(2) with $Q=Q_\kappa^\pi$ yields
$Q_\kappa^\pi(\vs,\cdot)\in C^\infty(\Sphere^{m-1})$ for every $\vs$.
\end{proof}

\paragraph{Approximate solvers.}
Exact inner optimality is not required for \cref{thm:smooth-fixed-point}. Suppose the exact solver
$\aopt$ is replaced by any feasible, measurable map $\hat{\va}(\vs,\vc)\in\mathcal A(\vs)$, and the
smoothed operator $\mathcal T_\kappa^\pi$ in~\eqref{eq:Tkappa} is defined with $\hat{\va}$ in place
of $\aopt$. The contraction argument in Part~(1) uses only boundedness of the reward and
$0\le\gamma<1$, and the smoothing argument in Part~(2) uses only that the kernel
$K_\kappa(\tilc\mid\vc)$ is $C^\infty$ in $\vc$; neither uses optimality of $\aopt$. Hence the same
conclusions hold: the modified operator is a $\gamma$-contraction with a unique fixed point, and
that fixed point is $C^\infty$ in $\vc$. Only the interpretation changes---the fixed point is the
smoothed value induced by the approximate solver $\hat{\va}$ rather than by the exact argmin.

\section{Proof of Theorem~\ref{thm:consistency}}
\label{app:consistency}

\thmConsistency*

\begin{proof}
Throughout, write $R_{\max}:=\|r\|_\infty<\infty$ (Assumption~\ref{assump:bounded}) and recall
that $\mathcal A(\vs)$ is finite for every reachable $\vs$. Both fixed points are bounded by
$R_{\max}/(1-\gamma)$: from $Q_\kappa^\pi=\mathcal T_\kappa^\pi Q_\kappa^\pi$ we get
$\|Q_\kappa^\pi\|_\infty\le R_{\max}+\gamma\|Q_\kappa^\pi\|_\infty$, and likewise for $Q^\pi$.

We first record a local-constancy property of the solver away from the switching boundary and a
two-step concentration property of the smoothing kernel, then couple the smoothed and unsmoothed
evaluations and control their gap through a first-disagreement time.

\paragraph{Preliminary 1: local constancy of the solver.}
Fix a reachable $\vs$ and a direction $\vc\notin\mathcal B_{\vs}$.
If $|\mathcal A(\vs)|=1$ the solver is constant in $\vc$, so $\mathcal B_{\vs}=\emptyset$
and the disagreement events below are empty; the claim is then trivial, and we assume
$|\mathcal A(\vs)|\ge 2$ for the remainder.
The minimizer $\aopt(\vs,\vc)$
is then unique, and the margin
\[
\delta_{\vs}(\vc)\;:=\;\min_{\va\in\mathcal A(\vs)\setminus\{\aopt(\vs,\vc)\}}
\vc^\top\!\big(\va-\aopt(\vs,\vc)\big)\;>\;0 .
\]
Let $D(\vs):=\max_{\va,\va'\in\mathcal A(\vs)}\|\va-\va'\|_2$. For any $\hat\vc\in\Sphere^{m-1}$ and
any suboptimal $\va$,
\[
\hat\vc^\top\!\big(\va-\aopt(\vs,\vc)\big)
=\vc^\top\!\big(\va-\aopt(\vs,\vc)\big)+(\hat\vc-\vc)^\top\!\big(\va-\aopt(\vs,\vc)\big)
\;\ge\;\delta_{\vs}(\vc)-D(\vs)\,\|\hat\vc-\vc\|_2 .
\]
Hence $\|\hat\vc-\vc\|_2<\delta_{\vs}(\vc)/D(\vs)$ implies
\begin{equation}\label{app:local-constancy}
\aopt(\vs,\hat\vc)=\aopt(\vs,\vc),
\end{equation}
so $\aopt(\vs,\cdot)$ is locally constant at every $\vc\notin\mathcal B_{\vs}$. The kernels below
are absolutely continuous with respect to surface measure, and the tie surfaces have measure zero,
so the perturbed directions avoid $\mathcal B_{\vs}$ almost surely and all solver outputs below are
a.s.\ well defined.

\paragraph{Preliminary 2: one- and two-step concentration.}
For a center $\vx\in\Sphere^{m-1}$ and $Z\sim K_\kappa(\cdot\mid\vx)$, the rotational symmetry of
the vMF kernel~\eqref{eq:vmf} makes the law of $\vx^\top Z$ independent of the center $\vx$, so the
one-step tail
\[
p_\kappa(r):=\sup_{\vx\in\Sphere^{m-1}}
\mathbb P_{\,Z\sim K_\kappa(\cdot\mid\vx)}\!\big(\|Z-\vx\|_2\ge r\big)
\]
does not depend on the center. Since $\|Z-\vx\|_2^2=2\,(1-\vx^\top Z)$ with $1-\vx^\top Z\ge 0$,
Markov's inequality gives, for every fixed $r>0$,
\[
p_\kappa(r)
=\mathbb P\!\big(1-\vx^\top Z\ge r^2/2\big)
\le \frac{2\,\E[\,1-\vx^\top Z\,]}{r^2}.
\]
The vMF kernel concentrates at its center as $\kappa\to\infty$, so $\E[\vx^\top Z]\to 1$ and hence
$\E[\,1-\vx^\top Z\,]\to 0$, giving
\begin{equation}\label{app:onestep-conc}
p_\kappa(r)\;\xrightarrow[\kappa\to\infty]{}\;0 .
\end{equation}
At non-initial steps the bootstrap direction is perturbed once and then again before the next
solve, so the relevant object is the two-step kernel
\[
L_\kappa(\cdot\mid\vc):=\int_{\Sphere^{m-1}} K_\kappa(\cdot\mid\vx)\,K_\kappa(\dd\vx\mid\vc),
\]
the law of $Y$ obtained by $X\sim K_\kappa(\cdot\mid\vc)$, $Y\sim K_\kappa(\cdot\mid X)$. By the
triangle inequality $\|Y-\vc\|_2\le\|Y-X\|_2+\|X-\vc\|_2$ and a union bound, for every $r>0$,
\begin{equation}\label{app:twostep-conc}
q_\kappa(r):=\sup_{\vx\in\Sphere^{m-1}}
\mathbb P_{\,Y\sim L_\kappa(\cdot\mid\vx)}\!\big(\|Y-\vx\|_2\ge r\big)
\;\le\;2\,p_\kappa(r/2)\;\xrightarrow[\kappa\to\infty]{}\;0,
\end{equation}
using that, conditional on $X=\vx'$, the law of $Y$ is $K_\kappa(\cdot\mid\vx')$, hence
$\mathbb P(\|Y-X\|_2\ge r/2\mid X=\vx')\le p_\kappa(r/2)$ for every $\vx'$.

\paragraph{Step 1: coupled evaluations and the first-disagreement time.}
Fix $\vs$ and $\vc\notin\mathcal B_{\vs}$. Unrolling $Q^\pi=\mathcal T^\pi Q^\pi$ and
$Q_\kappa^\pi=\mathcal T_\kappa^\pi Q_\kappa^\pi$ for $T$ steps generates two processes on
$\mathcal S\times\Sphere^{m-1}$, both started at $(\vs_0,\vc_0)=(\vs,\vc)$, with on-policy centers
$\vc_t\sim\pi(\cdot\mid\vs_t)$ for $t\ge1$. The unsmoothed process solves at the center,
$\va_t^0=\aopt(\vs_t^0,\vc_t^0)$; the smoothed process solves at a perturbed direction $\tilc_t$,
where $\tilc_0\sim K_\kappa(\cdot\mid\vc_0)$ and, for $t\ge1$,
$\tilc_t\sim L_\kappa(\cdot\mid\vc_t^\kappa)$, giving $\va_t^\kappa=\aopt(\vs_t^\kappa,\tilc_t)$.
Couple the processes so that equal actions share transition noise and equal next states share the
next center draw. As long as the actions have agreed the states and centers agree too; in
particular, on the event $\{\tau_\kappa\ge t\}$ we have
$(\vs_t^\kappa,\vc_t^\kappa)=(\vs_t^0,\vc_t^0)$.
Define the first-disagreement time
\[
\tau_\kappa:=\inf\{t\ge0:\va_t^\kappa\neq\va_t^0\}.
\]
With $G_T^{0}:=\sum_{t=0}^{T-1}\gamma^t r(\vs_t^0,\va_t^0)$ and
$G_T^{\kappa}:=\sum_{t=0}^{T-1}\gamma^t r(\vs_t^\kappa,\va_t^\kappa)$, the event
$\{\tau_\kappa\ge T\}$ forces $G_T^{\kappa}=G_T^{0}$, so
\begin{equation}\label{app:trunc-gap}
\big|\E[G_T^{\kappa}]-\E[G_T^{0}]\big|
\;\le\;\frac{2R_{\max}}{1-\gamma}\,\mathbb P(\tau_\kappa<T).
\end{equation}

\paragraph{Step 2: the disagreement probability vanishes at every fixed horizon.}
At $t=0$ the processes share $(\vs,\vc)$ with $\vc\notin\mathcal B_{\vs}$, so by
\eqref{app:local-constancy} a disagreement needs
$\|\tilc_0-\vc\|_2\ge\delta_{\vs}(\vc)/D(\vs)$, whence
$\mathbb P(\tau_\kappa=0)\le p_\kappa(\delta_{\vs}(\vc)/D(\vs))\to0$ by~\eqref{app:onestep-conc}.
For $t\ge1$, set
\[
f_\kappa(\vs,\vc):=\mathbb P_{\,Y\sim L_\kappa(\cdot\mid\vc)}\!\big(\aopt(\vs,Y)\neq\aopt(\vs,\vc)\big)
\in[0,1].
\]
By~\eqref{app:local-constancy} and~\eqref{app:twostep-conc},
$f_\kappa(\vs,\vc)\le q_\kappa(\delta_{\vs}(\vc)/D(\vs))\to0$ for every $\vc\notin\mathcal B_{\vs}$.
On $\{\tau_\kappa\ge t\}$ the coupling gives $(\vs_t^\kappa,\vc_t^\kappa)=(\vs_t^0,\vc_t^0)$ and
$\tilc_t\sim L_\kappa(\cdot\mid\vc_t^0)$, so
\[
\mathbb P(\tau_\kappa=t)\;\le\;\E\!\big[f_\kappa(\vs_t^0,\vc_t^0)\big].
\]
This expectation is under the \emph{unsmoothed} marginal of $(\vs_t^0,\vc_t^0)$, which is
independent of $\kappa$. The zero-boundary-mass assumption gives
$\vc_t^0\notin\mathcal B_{\vs_t^0}$ a.s., so $f_\kappa(\vs_t^0,\vc_t^0)\to0$ a.s.; since
$f_\kappa\le1$, dominated convergence yields $\mathbb P(\tau_\kappa=t)\to0$ for each fixed $t\ge1$.
Summing the finitely many disjoint events,
\[
\mathbb P(\tau_\kappa<T)=\sum_{t=0}^{T-1}\mathbb P(\tau_\kappa=t)
\;\xrightarrow[\kappa\to\infty]{}\;0\qquad\text{for every fixed }T.
\]

\paragraph{Step 3: removing the truncation.}
Using $\|Q^\pi\|_\infty,\|Q_\kappa^\pi\|_\infty\le R_{\max}/(1-\gamma)$, the discounted tails obey
$\big|Q^\pi(\vs,\vc)-\E[G_T^{0}]\big|\le R_{\max}\gamma^T/(1-\gamma)$ and similarly for
$Q_\kappa^\pi$. Combining with~\eqref{app:trunc-gap},
\[
\big|Q_\kappa^\pi(\vs,\vc)-Q^\pi(\vs,\vc)\big|
\;\le\;\frac{2R_{\max}\gamma^T}{1-\gamma}+\frac{2R_{\max}}{1-\gamma}\,\mathbb P(\tau_\kappa<T).
\]
Given $\varepsilon>0$, pick $T$ making the first term $\le\varepsilon/2$, then $\kappa$ large enough
(Step~2) making the second $\le\varepsilon/2$. Thus $Q_\kappa^\pi(\vs,\vc)\to Q^\pi(\vs,\vc)$ for
every reachable $\vs$ and every $\vc\notin\mathcal B_{\vs}$.

\paragraph{Step 4: a.e.\ and a.s.\ statements.}
Each tie surface $\{\vc:\vc^\top(\va-\va')=0\}$ has surface measure zero, and $\mathcal B_{\vs}$ is
contained in their finite union over $\va\neq\va'\in\mathcal A(\vs)$; hence $\mathcal B_{\vs}$ has
surface measure zero and the convergence holds for surface-a.e.\ $\vc\in\Sphere^{m-1}$. Finally,
since $\pi(\cdot\mid\vs)$ assigns zero mass to $\mathcal B_{\vs}$, a draw $C\sim\pi(\cdot\mid\vs)$
lies outside $\mathcal B_{\vs}$ a.s., giving $Q_\kappa^\pi(\vs,C)\to Q^\pi(\vs,C)$ almost surely.
\end{proof}

\section{Weighted Flow Matching = KL-Regularized Policy Improvement}
\label{app:reweighted-fm-pmd}

\begin{theorem}[Weighted flow matching implements KL-regularized policy improvement]
\label{thm:fm-pmd}
Assume $w_k(\vs,\vc)\ge 0$ and for $D$-a.e.\ $\vs$,
\begin{equation}
\label{eq:Z-def}
0<Z_k(\vs):=\mathbb E_{\vc\sim \pi_k(\cdot\mid \vs)}[w_k(\vs,\vc)]<\infty.
\end{equation}
Define the reweighted distribution
\begin{equation}
\label{eq:pi-kplus1-def}
\pi_{k+1}(\vc\mid \vs)\ :=\ \frac{1}{Z_k(\vs)}\,\pi_k(\vc\mid \vs)\,w_k(\vs,\vc).
\end{equation}
Then:

\noindent (i) Minimizing the weighted flow-matching objective \cref{eq:rfm-md}
is equivalent (i.e., has the same minimizers) to minimizing the unweighted flow-matching objective with
$\vc_1\sim \pi_{k+1}(\cdot\mid \vs)$:
\begin{equation}
\label{eq:unweighted-target}
\tilde{\mathcal L}_k(\theta)
:=\mathbb E\!\left[
\big\|\Pi_{\vc_t}v_\theta(\vc_t,\vs,t)-u(\vc_t,\vs,t)\big\|_2^2
\right],
\end{equation}
where the expectation is over $\vs\sim D$, $\vc_1\sim \pi_{k+1}(\cdot\mid \vs)$, $\vc_0\sim p_0$, and $t\sim U(0,1)$.

\noindent (ii) With the exponential weighting
$w_k(\vs,\vc)\propto \exp\!\big(\tfrac{1}{\lambda}Q(\vs,\va^*(\vs,\vc))\big)$,
the distribution $\pi_{k+1}(\cdot\mid \vs)$ in \cref{eq:pi-kplus1-def} coincides with the solution of the
KL-regularized policy improvement problem
\begin{equation}
\label{eq:pmd}
\pi_{k+1}(\cdot\mid \vs)\in
\arg\max_{\pi(\cdot\mid \vs)}
\ \mathbb E_{\vc\sim \pi(\cdot\mid \vs)}\!\big[Q(\vs,\va^*(\vs,\vc))\big]
-\lambda\,\mathrm{KL}\!\big(\pi(\cdot\mid \vs)\,\|\,\pi_k(\cdot\mid \vs)\big),
\end{equation}
whose closed form is the exponential tilt
$\pi_{k+1}(\vc\mid \vs)\propto \pi_k(\vc\mid \vs)\exp\!\big(\tfrac{1}{\lambda}Q(\vs,\va^*(\vs,\vc))\big)$.
\end{theorem}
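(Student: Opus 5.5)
For part (i), I would rewrite the weighted objective in \cref{eq:rfm-md} by a change of measure in the $\vc_1$ variable. For $D$-a.e.\ $\vs$, the definition \cref{eq:pi-kplus1-def} gives $\pi_k(\vc_1\mid\vs)\,w_k(\vs,\vc_1)=Z_k(\vs)\,\pi_{k+1}(\vc_1\mid\vs)$. Because $\vc_0\sim p_0$ and $t\sim U(0,1)$ are drawn independently of $\vc_1$, the integrand $\|\Pi_{\vc_t}v_\theta-u\|^2$ is a fixed function of $(\vs,\vc_0,\vc_1,t)$. Tonelli then lets me write
\[
\mathcal L(\theta)=\E_{\vs\sim D}\Big[Z_k(\vs)\,\E_{\vc_1\sim\pi_{k+1}(\cdot\mid\vs),\,\vc_0,\,t}\big[\|\Pi_{\vc_t}v_\theta(\vc_t,\vs,t)-u(\vc_t,\vs,t)\|_2^2\big]\Big].
\]
This is the unweighted loss $\tilde{\mathcal L}_k$, except that the states are drawn from the tilted distribution $D'(\dd\vs)\propto Z_k(\vs)D(\dd\vs)$.

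The main subtlety is the factor $Z_k(\vs)$, which keeps the two objectives from being literally proportional. I would handle it in one of two ways. The first is to note that in practice the weights are normalized per state, so that $w_k/Z_k$ is used and $Z_k\equiv1$; with that convention the objectives coincide exactly. The second is to argue that the minimizers agree anyway. Since $v_\theta$ is conditioned on $\vs$, both objectives decompose statewise: over a sufficiently expressive class the pointwise-in-$\vs$ minimizer (the conditional-expectation vector field for the flow targeting $\pi_{k+1}(\cdot\mid\vs)$) minimizes each inner expectation separately. Positive reweighting of states by $Z_k\in(0,\infty)$ therefore does not change the minimizer set. I would state this expressivity/statewise assumption explicitly, because it is where the equivalence actually rests. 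Positivity and finiteness of $Z_k$ from \cref{eq:Z-def} are what make $D$ and $D'$ mutually absolutely continuous.

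For part (ii), I would use the standard Gibbs variational argument, working statewise. Write $g(\vc):=Q(\vs,\aopt(\vs,\vc))$, which is bounded. For any $\pi\ll\pi_k$,
\[
\E_\pi[g]-\lambda\,\mathrm{KL}(\pi\|\pi_k)=\lambda\log Z-\lambda\,\mathrm{KL}(\pi\|\pi^\star),
\qquad \pi^\star\propto\pi_k e^{g/\lambda},\quad Z=\E_{\pi_k}[e^{g/\lambda}].
\]
This identity follows by expanding $\log(\dd\pi/\dd\pi_k)=\log(\dd\pi/\dd\pi^\star)+g/\lambda-\log Z$. If $\pi\not\ll\pi_k$, the KL is $+\infty$, so such $\pi$ are never optimal. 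Since $\mathrm{KL}\ge0$ with equality iff $\pi=\pi^\star$, the maximizer is unique and equals $\pi^\star$, which matches \cref{eq:pi-kplus1-def} for the exponential weights; the proportionality constant in $w_k$ cancels in the normalization. Boundedness of $Q$ guarantees $0<Z<\infty$, so the hypothesis \cref{eq:Z-def} holds automatically in this case.
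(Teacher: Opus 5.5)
Your proposal follows the paper's proof essentially step for step: part (i) is the same change of measure $\pi_k(\vc_1\mid\vs)\,w_k(\vs,\vc_1)=Z_k(\vs)\,\pi_{k+1}(\vc_1\mid\vs)$ followed by pulling $Z_k(\vs)$ out of the inner expectation, and part (ii) is the exponential-tilt closed form, which the paper only cites as standard while you derive it, with uniqueness, from the Gibbs variational identity. Your extra care with $Z_k(\vs)$ is warranted: the paper simply asserts that multiplying each per-state objective by $Z_k(\vs)>0$ leaves the minimizers over the shared parameter $\theta$ unchanged, and this holds only under the statewise-minimizability (expressivity) assumption you make explicit, or when $Z_k$ is constant in $\vs$ (e.g., with per-state normalized weights).
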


\begin{proof}
Fix $\vs$ and define the conditional regression loss
\[
g_\theta(\vs,\vc_1)
:=\mathbb E_{\vc_0\sim p_0,\ t\sim U(0,1)}
\Big[\big\|\Pi_{\vc_t}v_\theta(\vc_t,\vs,t)-u(\vc_t,\vs,t)\big\|_2^2\Big],
\]
so that \cref{eq:rfm-md} can be written as
\[
\mathcal L_k(\theta)
=\mathbb E_{\vs\sim D}\Big[
\mathbb E_{\vc_1\sim \pi_k(\cdot\mid \vs)}\big[w_k(\vs,\vc_1)\,g_\theta(\vs,\vc_1)\big]
\Big].
\]

\noindent\textbf{(i)}
By the definition of $\pi_{k+1}$ in \cref{eq:pi-kplus1-def}, for each fixed $\vs$ we have
\[
\mathbb E_{\vc_1\sim \pi_k(\cdot\mid \vs)}\big[w_k(\vs,\vc_1)\,g_\theta(\vs,\vc_1)\big]
= Z_k(\vs)\,\mathbb E_{\vc_1\sim \pi_{k+1}(\cdot\mid \vs)}\big[g_\theta(\vs,\vc_1)\big].
\]
Plugging this into the outer expectation gives
\[
\mathcal L_k(\theta)
=\mathbb E_{\vs\sim D}\Big[
Z_k(\vs)\,\mathbb E_{\vc_1\sim \pi_{k+1}(\cdot\mid \vs)}\big[g_\theta(\vs,\vc_1)\big]
\Big].
\]
Since $Z_k(\vs)>0$ and does not depend on $\theta$, multiplying the per-$\vs$ objective by $Z_k(\vs)$ does not change
the set of minimizers over $\theta$. This shows that minimizing $\mathcal L_k(\theta)$ is equivalent to minimizing
$\tilde{\mathcal L}_k(\theta)$ in \cref{eq:unweighted-target}.

\noindent\textbf{(ii)}
With $w_k(\vs,\vc)\propto \exp\!\big(\tfrac{1}{\lambda}Q(\vs,\va^*(\vs,\vc))\big)$, the reweighted distribution
\cref{eq:pi-kplus1-def} becomes
\[
\pi_{k+1}(\vc\mid \vs)
\propto \pi_k(\vc\mid \vs)\exp\!\Big(\tfrac{1}{\lambda}Q(\vs,\va^*(\vs,\vc))\Big),
\]
which matches the standard closed-form solution of the KL-regularized update in \cref{eq:pmd}.
\end{proof}

\section{Experimental Details of Benchmarks}
\subsection{Environment Details}
\label{app:benchmark:env}

\paragraph{Dynamic scheduling.} Consider the example of scheduling $N$ volunteer health workers to $J$ patients, where each have limited time windows for scheduling and $K$ available timeslots. An action here corresponds to selecting up to $B$ patients to serve and assigning workers to these patients during feasible time slots. As defined in the benchmark \citep{xu2025reinforcement}, the scheduling problem can be formulated as

\begin{align}
\max_{x,a} \quad & Q(s,a) \\
\text{s.t.} \quad 
& \sum_{j \in [J]} a_j \le B, \notag \\[4pt]
& \sum_{j \in [J]} x_{ijk} \le 1 
&& \forall i \in [N],\; k \in [K], \notag \\[4pt]
& \sum_{i \in [N],\, k \in [K]} x_{ijk} \le 2
&& \forall j \in [J], \notag \\[4pt]
& 2\, a_j \le \sum_{i \in [N],\, k \in [K]} x_{ijk}
&& \forall j \in [J], \notag \\[4pt]
& a_j \ge x_{ijk}
&& \forall i \in [N],\; j \in [J],\; k \in [K], \notag \\[4pt]
& a_j \in \{0,1\}
&& \forall j \in [J], \notag \\[4pt]
& x_{ijk} \in \{0,1\}
&& \forall i \in [N],\; j \in [J],\; k \in [K]. \notag
\end{align}

We have a decision variable $x_{ijk}$ whenever a worker $i$ and patient $j$ are both available at timeslot $k$.

\paragraph{Dynamic routing.} States lie on nodes of a graph, and an action corresponds to a bounded-length route that starts and ends at a designated spot. For this problem, we use the graph of the real-world network of the London underground\footnote{\url{https://commons.wikimedia.org/wiki/London_Underground_geographic_maps}}. One node is placed at each station. 

We model this routing problem as a constrained action-selection on a graph $G(\mathcal{V},\mathcal{E})$ with a designated source $\hat{v} \in \mathcal{V}$. The agent must select a path that starts and ends at this source while also ensuring that this stays within the budget of the number of nodes we can act on. We set the maximum length of this path, $T$, to be double this budget, or $T=2B$. We then use a flow variable $f_{j,k,t} \in \{0,1\}$, where $f_{j,k,t} = 1$ indicates that at timestep $t$, the path covers edge $(j,k)$. A separate variable, $a_j$ indicates whether we pass through a node $j$.  

\newcommand{\Tset}{\{1,\dots,T\}}
\newcommand{\Tmset}{\{1,\dots,T-1\}}

\begin{equation}
\begin{aligned}
\max_{\{f_{j,k,t}\},\,\{a_j\}} \quad & Q(s,a) \\
\text{s.t.}\quad
& \sum_{k:(\hat{v},k)\in\mathcal{E}} f_{\hat{v},k,1} = 1 \\
& \sum_{k:(k,\hat{v})\in\mathcal{E}} f_{k,\hat{v},T} = 1 \\
& \sum_{(j,k)\in\mathcal{E}} f_{j,k,t} = 1, 
&& \forall t \in \Tset \\
& \sum_{k:(k,j)\in\mathcal{E}} f_{k,j,t}
  = \sum_{k:(j,k)\in\mathcal{E}} f_{j,k,t+1},
&& \forall j \in \mathcal{V},\; \forall t \in \Tmset \\
& a_j \le \sum_{t\in\Tset} \sum_{k:(k,j)\in\mathcal{E}} f_{j,k,t},
&& \forall j \in \mathcal{V} \\
& \sum_{j\in\mathcal{V}} a_j \le B \\
& f_{j,k,t} \in \{0,1\},
&& \forall j,k\in\mathcal{V},\; \forall t\in\Tset \\
& a_j \in \{0,1\},
&& \forall j\in\mathcal{V}.
\end{aligned}
\end{equation}

\begin{figure*}
    \centering
    \includegraphics[width=0.8\linewidth]{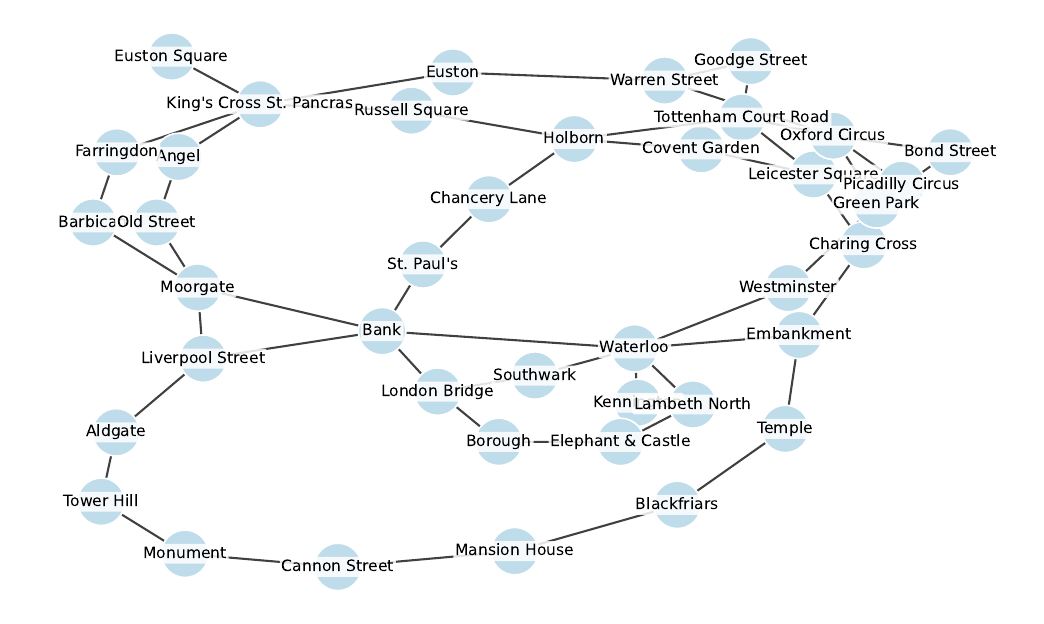}
    \caption{Graph used for the dynamic routing problem, based on the London tube network.}
    \label{fig:tube}
\end{figure*}

\paragraph{Dynamic assignment.} This problem is an instance of the NP-hard generalized assignment problem \citep{ozbakir2010bees}. In this setting, workers have a given capacity $b_i$, where serving patient $j$ has a cost $c_j$. An action assigns workers to states without exceeding any worker's capacity. We have a decision variable $x_{ij}$ for all $i$ and $j$ that indicates whether worker $i$ gets assigned to state $j$. 

\begin{align}
\max_{x,a} \quad & Q(s,a) \\
\text{s.t.} \quad
& \sum_{j \in [J]} c_j x_{ij} \le b_i
&& \forall i \in [N], \notag \\[6pt]
& a_j \le \sum_{i \in [N]} x_{ij}
&& \forall j \in [J], \notag \\[10pt]
& x_{ij} \in \{0,1\}
&& \forall i \in [N],\; j \in [J], \notag \\
& a_j \ge x_{ij}
&& \forall i \in [N],\; j \in [J], \notag \\
& a_j \in \{0,1\}
&& \forall j \in [J]. \notag
\end{align}

\paragraph{Dynamic intervention.}
In public health programs, different interventions may impact different groups of individuals. We model this as a finite-horizon reinforcement learning problem. In this problem, each action $a_i \in \{0,1\}$ corresponds to one of $N$ possible interventions. In the case of health workers and patients, consider the patient $j$ and their state $s_j$ measuring their engagement level. Each action has a cost $c_i$, and we are limited by a total budget $B$, producing the optimization problem

\begin{equation}
\max_{a}\; Q(s,a)
\quad \text{s.t.}\quad
\sum_{j \in [J]} c_j a_j \le B,\qquad
a_j \in \{0,1\}\ \forall j \in [J].
\end{equation}

Following the baseline method, we model the probability patient $j$ transitions from their current engagement state $s_j$ to a higher ($s_j^{+}$) or lower ($s_j^{-}$) state as follows

\[
P_j(s_j, a, s_j^{+}) = \phi_j\!\big(\omega_j(s_j)^{\top} a\big), \qquad
P_j(s_j, a, s_j^{-}) = 1 - P_j(s_j, a, s_j^{+}).
\]

where $w_j(s_j) \in \mathbb{R}^N$ represents how interventions affect $j$ potentially dependent on their current state. 

In many real-world public health settings, the effect of additional actions is nonlinear. For example, the widely used model of smoking cessation is described as "S-shape", suggesting that earlier interventions can increase an individual's chances of improvement, with later interventions only resulting in a plateau \citep{levy2006simulation}. Following \citep{xu2025reinforcement}, we use these as motivating descriptions for the link function between the action score and the upward transition probability. For each state $j$, we form a score $x$ and map it to a probability with a sigmoid link

\begin{equation}
\tilde{\phi}_j(x) = a_j\,\sigma(b_j x - x_{0,j}), 
\qquad
\sigma(u)=\frac{1}{1+e^{-u}},
\end{equation}

We partition states into two subsets with different sigmoid response parameters. For one, we sample $a \in [0.7, 1.0]$, $b \in [1.4, 1.9]$, $x_0 \in \{0,10\}$, and for the other we sample $a \in [0.2, 0.5]$, set $b=1$, and sample $x_0 \in \{1,2\}$.

In implementation, we use a stabilized version

\begin{equation}
\phi_j(x)=\alpha\big(\tilde{\phi}_j(x)-\tilde{\phi}_j(0)\big)+\varepsilon,
\end{equation}
with \(\alpha=0.98\) and \(\varepsilon=0.01\).

\subsection{Implementation Details}
\label{app:benchmark-implementation}

\paragraph{Benchmark setup.}
Across all benchmark settings, we use population size $N=40$, budget $B=10$, and horizon $H=20$.

\paragraph{Training details of \ours.}

We follow \citet{ma2025efficient} and incorporate two stabilizing steps. Before executing an action, rather than sampling a single cost direction, we draw multiple candidate directions and select the one with the highest critic value. We also use an exponential moving average when forming the flow-matching weights during training

Our method is implemented in PyTorch with TorchRL~\citep{bou2023torchrl}. Hyperparameters are listed in Table~\ref{tab:hyperparams_dpmd}.

\paragraph{Compared methods.}
DQN-Sampling~\citep{he2016deep} samples $100$ feasible actions at each decision point and executes the one with the highest critic value.
We use the authors' original implementation\footnote{\url{https://github.com/lily-x/combinatorial-rmab}} of SEQUOIA~\citep{xu2025reinforcement}.
SRL was originally implemented in Julia\footnote{\url{https://github.com/tumBAIS/Structured-RL}}; we re-implemented it in PyTorch for a fair comparison.
For a fair comparison, all RL methods use the same critic-network architecture and the same number of warm-up steps, and are then trained until convergence.

\paragraph{Heuristic baselines.}
We additionally evaluate simple \textit{Random} and \textit{Greedy} heuristics tailored to each environment:
\begin{itemize}[leftmargin=1.2em]
    \item \textbf{Dynamic Scheduling.}
    \textit{Random} randomly permutes workers and patients, then iterates over patients and assigns the first two compatible unassigned workers; it stops when no feasible assignment remains.
    \textit{Greedy} repeatedly selects the unassigned patient with the largest number of compatible unassigned workers and assigns two compatible workers, until the budget is exhausted or no feasible assignment remains.

    \item \textbf{Dynamic Routing.}
    \textit{Random} enumerates simple cycles of length $T=2B$ using \texttt{networkx.simple\_cycles}, uniformly samples one cycle, and then uniformly samples $B$ nodes on that cycle to act on. Following~\citet{xu2025reinforcement}, we restrict to \emph{simple} cycles. This is weaker than uniformly sampling a feasible action, but implementing the latter would be substantially more complex.
    \textit{Greedy} selects the longest feasible cycle and then chooses $B$ nodes along it.

    \item \textbf{Dynamic Assignment.} \textit{Random} repeatedly assigns workers to
    states uniformly at random among the assignments that respect the remaining worker
    capacities, until no feasible assignment remains. \textit{Greedy} instead assigns
    workers to states in a greedy order while respecting capacity constraints, until no
    feasible assignment remains.

    \item \textbf{Dynamic Intervention.}
    \textit{Random} uniformly samples a subset of size $B$.
    \textit{Greedy} uses a warmup-then-exploit strategy: for the first 5 steps it executes random feasible actions while tracking empirical mean rewards; afterward, it selects the $B$ actions with the highest observed average reward at each step.
\end{itemize}

\begin{table}[t]
\centering
\caption{Hyperparameters used for benchmarks.}
\label{tab:hyperparams_dpmd}
\begin{tabular}{l l}
\toprule
\textbf{Hyperparameter} & \textbf{Setting} \\
\midrule
\multicolumn{2}{l}{\textit{Training schedule}} \\
Warmup steps & 1000 \\
Training episodes (\texttt{train\_updates}) & 2000 \\
Batch size & 64 \\
\midrule
\multicolumn{2}{l}{\textit{Core RL / optimization}} \\
Discount factor ($\gamma$) & 0.99 \\
Learning rate (Adam) & $1\times10^{-3}$ \\
Target update rate ($\tau$) & 0.005 \\
Delayed update & 2 \\
Reward scaling & 1.0 \\
Gradient clipping (max norm) & 5.0 \\
\midrule
\multicolumn{2}{l}{\textit{Policy training}} \\
Number of particles ($K$) & 12 \\
$\lambda$ schedule & start 2.0, end 0.8, steps 10000 \\
Weight clip ($w_{\max}$) & 4.0 \\
vMF concentration ($\kappa$) & 28.0 \\
vMF perturbations per candidate ($J$) & 1 \\
Flow steps & 36 \\
Q-norm clip & 3.0 \\
Q running beta & 0.05 \\
\midrule
\multicolumn{2}{l}{\textit{Policy network (MLP) }} \\
Time embedding & 32-d Fourier features ($L=16$ harmonics) \\
Policy architecture & MLP with hidden sizes $(32,32)$ \\
Sampling integrator & Heun (RK2), 30 steps \\
Optimizer & Adam~\citep{kingma2014adam} for both the actor and critic \\
\bottomrule
\end{tabular}
\end{table}

\section{Experimental Details of Sexually Transmitted Infection Testing}

\subsection{Environment Details}
\label{app:test:env}

We are given an undirected graph $\mathcal{G} = (\mathcal{V}, \mathcal{E})$
with $|\mathcal{V}| = n$ nodes. Each node $v \in \mathcal{V}$ has an
unknown binary label $Y_v \in \{0,1\}$, where $Y_v = 1$ indicates a
\emph{positive} individual and $Y_v = 0$ indicates a \emph{negative}
individual. Let $Y = (Y_v)_{v \in \mathcal{V}} \in \{0,1\}^{\mathcal{V}}$
denote the vector of all labels. The labels are jointly distributed according
to a prior distribution $\mathcal{P}$ over $\{0,1\}^{\mathcal{V}}$ that is
Markov with respect to $\mathcal{G}$ (e.g., specified by a graphical model).
At the beginning of each episode, a realization $Y \sim \mathcal{P}$ is drawn
and remains fixed throughout the episode.

Testing a node $v$ reveals its true label $Y_v$ and yields an immediate reward
$r(Y_v)$, where $r : \{0,1\} \to \mathbb{R}_{\ge 0}$ is a fixed reward
function (for example, $r(1) = 1$ and $r(0) = 0$ for counting detected
positives). The decision maker does not observe $Y$ directly and must decide
which nodes to test adaptively based on past outcomes.

\begin{table}[t]
\centering
\caption{Graph statistics for each disease network.}
\label{tab:graph_stats}
\begin{tabular}{l c c c c}
\toprule
\textbf{Disease} & \textbf{Nodes} & \textbf{Edges} & \textbf{Positive (\%)} & \textbf{Components} \\
\midrule
Chlamydia  & 100 & 63  & 44.0 & 37 \\
Gonorrhea  & 100 & 68  &  9.0 & 32 \\
HIV        & 100 & 70  & 27.0 & 37 \\
Syphilis   & 101 & 102 & 18.8 & 27 \\
\bottomrule
\end{tabular}
\end{table}

\begin{figure}[t]
\centering
\begin{subfigure}[b]{0.45\textwidth}
    \centering
    \includegraphics[width=\textwidth]{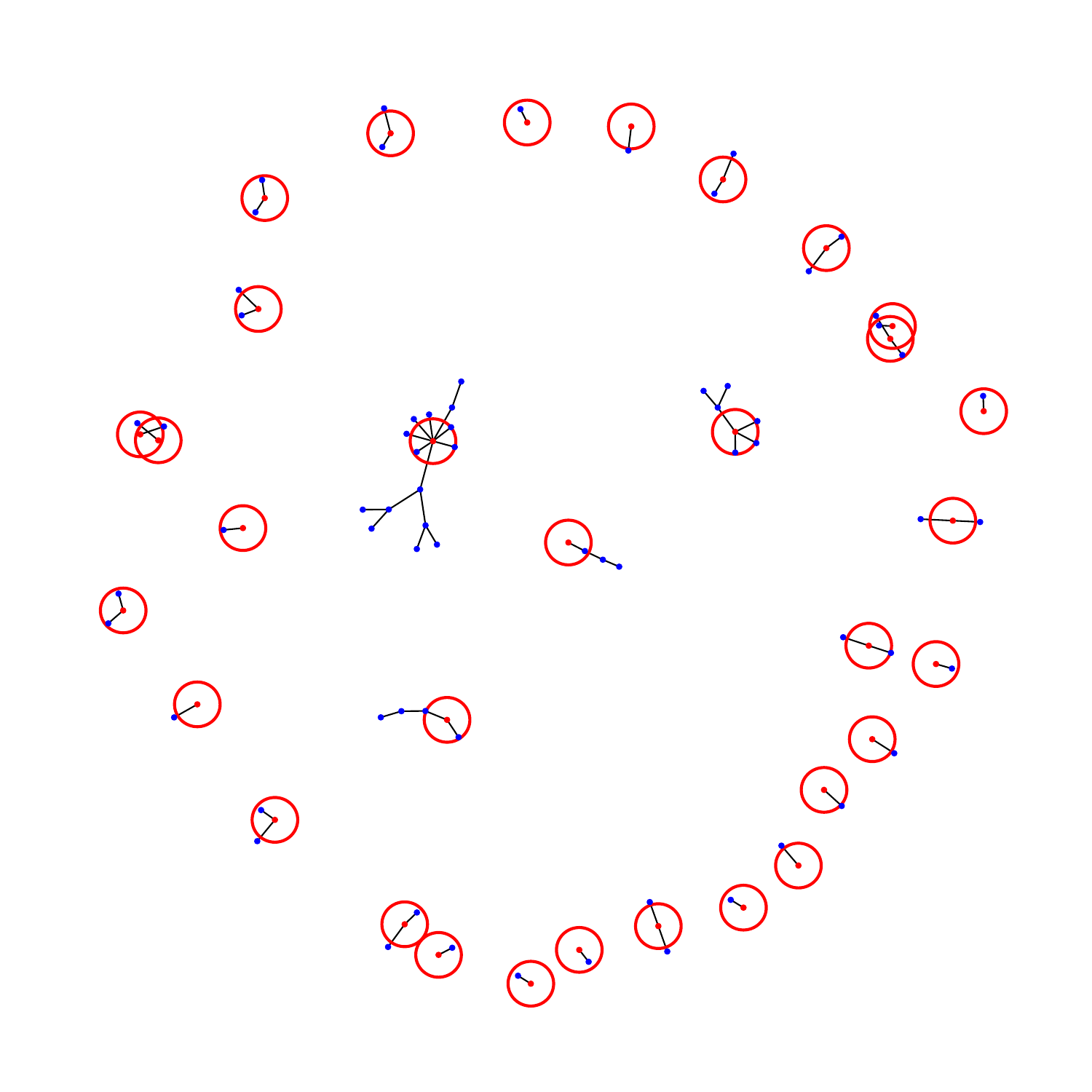}
    \caption{Gonorrhea}
\end{subfigure}
\begin{subfigure}[b]{0.45\textwidth}
    \centering
    \includegraphics[width=\textwidth]{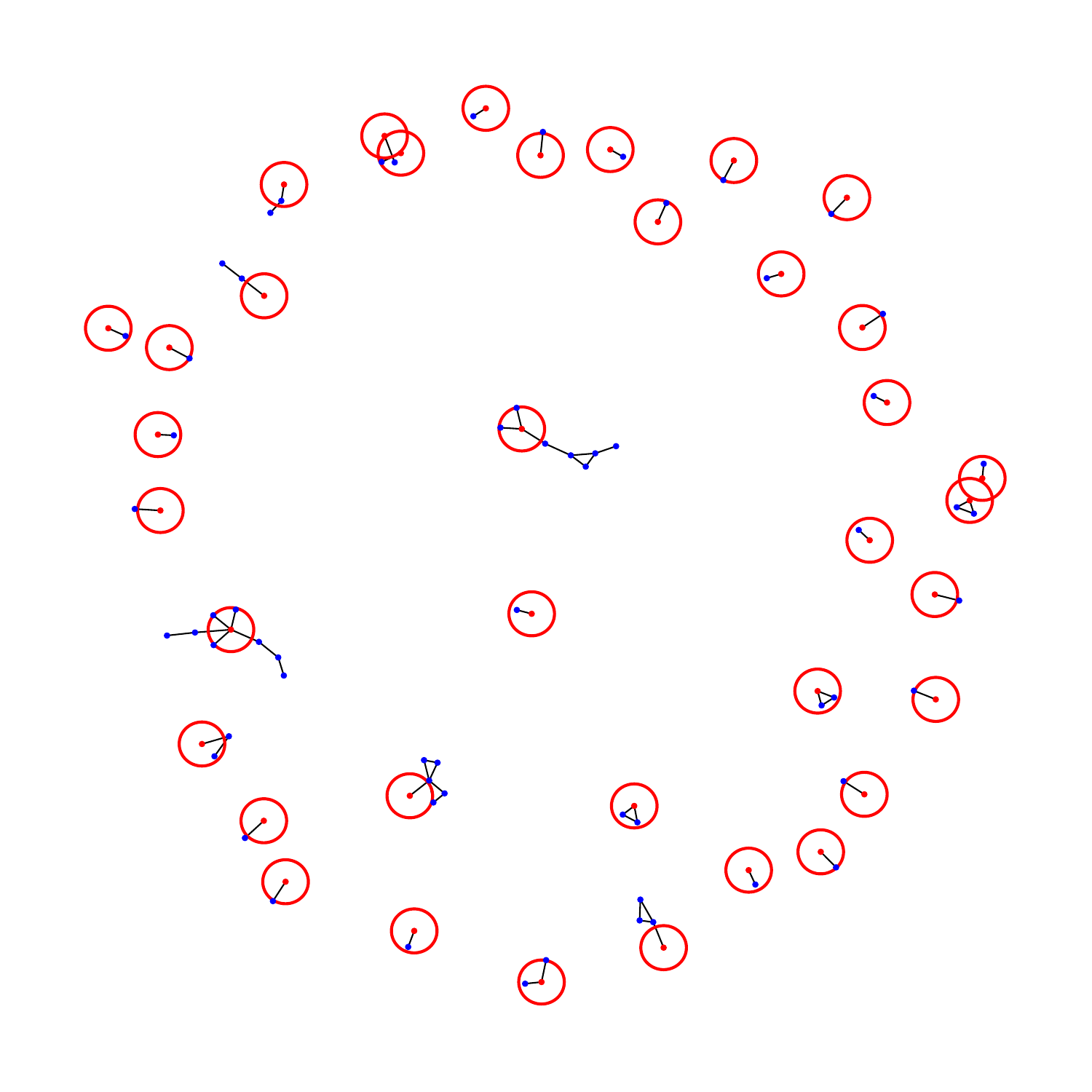}
    \caption{HIV}
\end{subfigure}
\hfill
\begin{subfigure}[b]{0.45\textwidth}
    \centering
    \includegraphics[width=\textwidth]{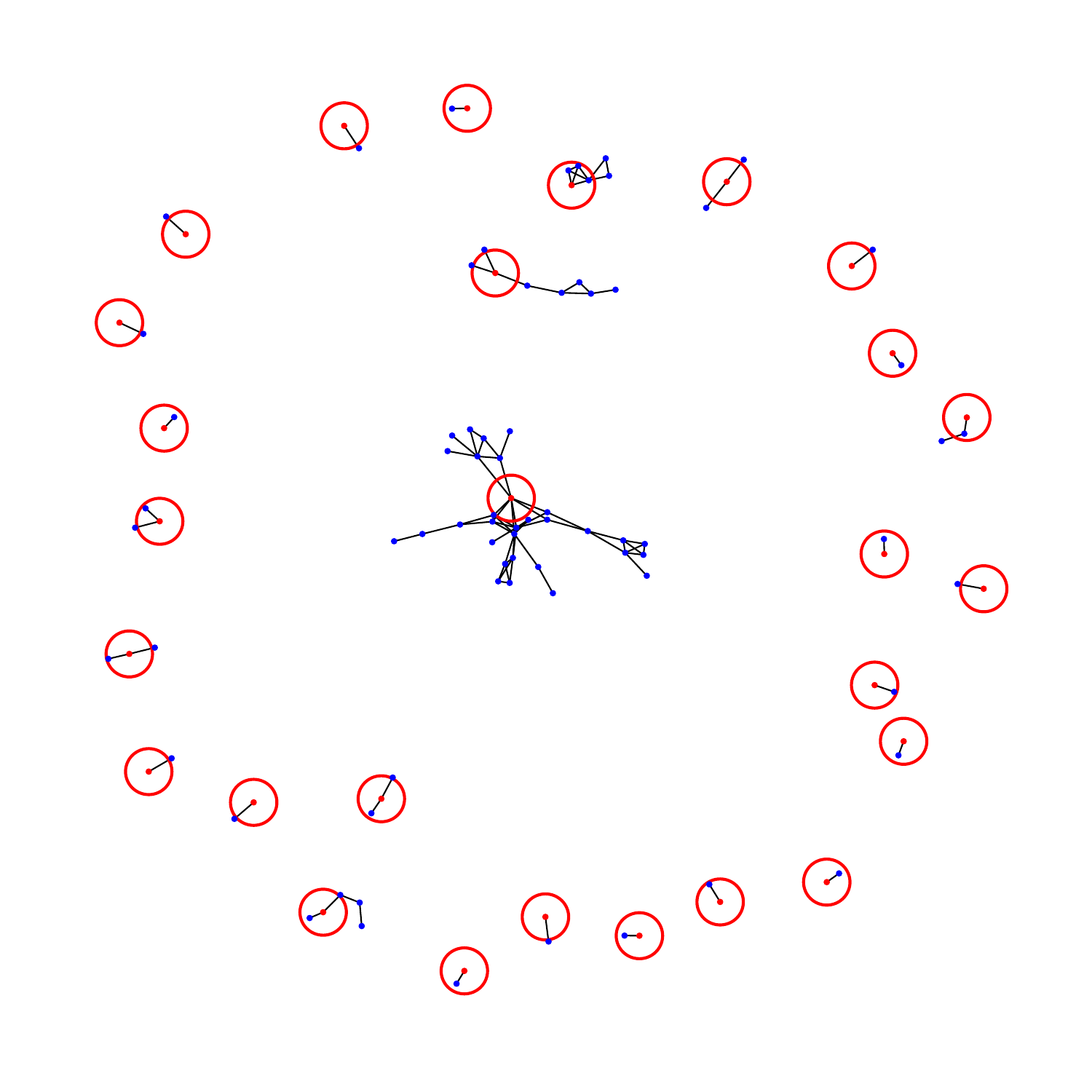}
    \caption{Syphilis}
\end{subfigure}
\begin{subfigure}[b]{0.45\textwidth}
    \centering
    \includegraphics[width=\textwidth]{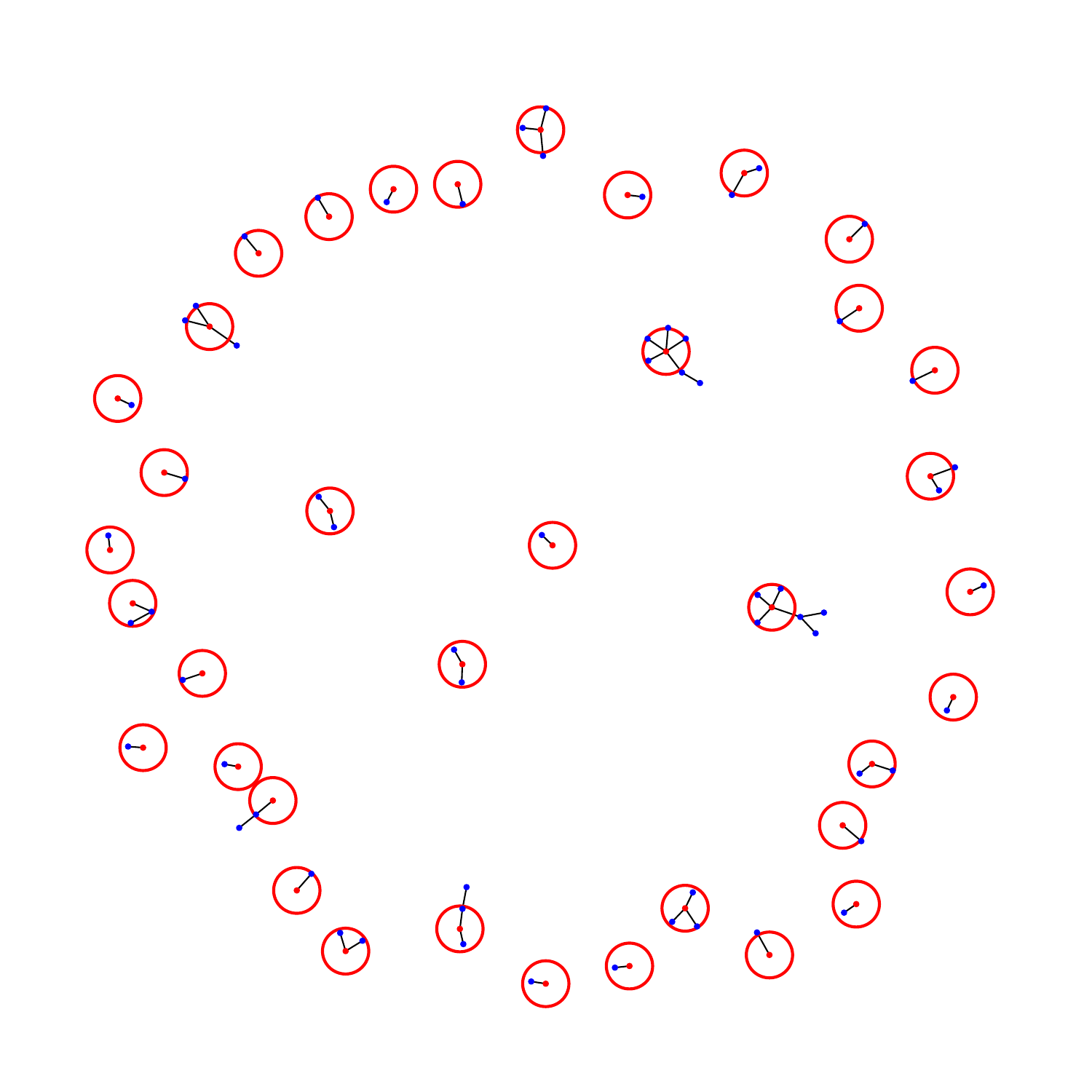}
    \caption{Chlamydia}
\end{subfigure}
\hfill
\caption{Contact networks for each disease. Nodes represent individuals; edges represent reported sexual contacts. Frontier roots are circled in red.}
\label{fig:disease_networks}
\end{figure}

\paragraph{State representation.}
We encode the observable status of each node by a variable
$X_t(v) \in \{-1,0,1\}$ at time $t$, where
\[
X_t(v) =
\begin{cases}
-1, & \text{if node $v$ has not been tested yet (label unknown)},\\
0,  & \text{if node $v$ has been tested and found negative},\\
1,  & \text{if node $v$ has been tested and found positive}.
\end{cases}
\]
Let $X_t = (X_t(v))_{v \in \mathcal{V}} \in \{-1,0,1\}^{\mathcal{V}}$ denote
the status vector at time $t$. The \emph{MDP state} at time $t$ is
\[
s_t = X_t.
\]
The graph $\mathcal{G}$ and prior $\mathcal{P}$ are fixed and known, so they
are treated as part of the problem instance rather than the dynamic state.
Initially, no node has been tested, so $X_0(v) = -1$ for all $v \in
\mathcal{V}$.

\paragraph{Frontier and actions.}
Let $\mathcal{C}_1, \dots, \mathcal{C}_m$ be the connected components of
$\mathcal{G}$, and fix a designated root node $\rho_j \in \mathcal{C}_j$ for
each component according to a given priority rule (e.g., the node with
highest marginal probability of being positive under $\mathcal{P}$).

Given a state $X_t$, we define the set of tested nodes
$S_t = \{v \in \mathcal{V} : X_t(v) \in \{0,1\}\}$. The \emph{frontier}
$\mathcal{F}(X_t)$ consists of nodes that are currently eligible for testing:
\[
\mathcal{F}(X_t)
\;=\;
\bigl\{\rho_j : S_t \cap \mathcal{C}_j = \emptyset \bigr\}
\;\cup\;
\bigl\{v \in \mathcal{V} : X_t(v) = -1
\text{ and } \exists\, u \in S_t \text{ with } (u,v) \in \mathcal{E}\bigr\}.
\]
Intuitively, in each connected component where no node has been tested, only
the root is available; once a node in a component has been tested, all
untested neighbors of tested nodes become available.

We consider a batched setting with a per-round testing budget $B \in
\{1,\dots,n\}$. At time $t$, the agent chooses an \emph{action}
$A_t \subseteq \mathcal{F}(X_t)$ satisfying $|A_t| \le B$, representing the
batch of nodes to be tested in parallel at time $t$.

\begin{definition}[Batched Adaptive Frontier Exploration on Graphs (B-AFEG)]
An instance of B-AFEG is specified by a quadruple
$(\mathcal{G}, \mathcal{P}, \gamma, B)$, where $\mathcal{G}$ is the graph,
$\mathcal{P}$ is the prior over true labels $Y$, $\gamma \in (0,1]$ is a
discount factor, and $B$ is the per-round testing budget.

At each time $t = 0,1,2,\dots$:
\begin{enumerate}
\item The state is $s_t = X_t \in \{-1,0,1\}^{\mathcal{V}}$.
\item The agent chooses an action $A_t \subseteq \mathcal{F}(X_t)$ with
$|A_t| \le B$.
\item For each $v \in A_t$, the true label $Y_v$ is revealed and the reward
contributed by $v$ is $r(Y_v)$. The immediate reward at time $t$ is
\[
R_t = \sum_{v \in A_t} r(Y_v).
\]
\item The state is updated deterministically by setting
\[
X_{t+1}(v) =
\begin{cases}
Y_v, & \text{if } v \in A_t,\\
X_t(v), & \text{otherwise},
\end{cases}
\]
for all $v \in \mathcal{V}$. Once $X_t(v) \in \{0,1\}$, it remains fixed for
the rest of the episode.
\end{enumerate}

A policy $\pi$ maps each state $X_t$ to a distribution
over feasible actions $A_t \subseteq \mathcal{F}(X_t)$ with $|A_t| \le B$. The
performance of a policy $\pi$ is measured by its expected discounted return
\[
J(\pi)
=
\mathbb{E}\!\left[
  \sum_{t=0}^{\infty} \gamma^t R_t
\right],
\]
where the expectation is taken over $Y \sim \mathcal{P}$ and any randomness in
$\pi$. The objective is to find an optimal policy
\[
\pi^\star \in \arg\max_{\pi} J(\pi).
\]

\end{definition}

\subsection{Implementation Details}
\label{app:test:implementation}

\paragraph{Dataset.}
The ICPSR dataset is publicly available.\footnote{\url{https://www.icpsr.umich.edu/web/ICPSR/studies/22140}}
For each disease, we construct a pruned contact network by iteratively sampling connected components from the full ICPSR graph until reaching a target size of roughly 100 nodes. To limit variability, we cap the overshoot at 10\% beyond the target, since the final component added may exceed the threshold. We repeat this randomized construction multiple times and retain the subgraph with the largest number of positive nodes. Table~\ref{tab:graph_stats} summarizes the resulting graph statistics and disease prevalence, and Fig.~\ref{fig:disease_networks} visualizes the corresponding networks.

\paragraph{Implementation.}
 At each iteration, the policy selects $B=5$ nodes.
For a fair comparison, all RL methods use the same critic-network architecture and the same number of warm-up steps, and are then trained until convergence. At test time, we report average return over 10 evaluation episodes. We use the hyperparameters in Table~\ref{tab:network_hparams}.

\paragraph{Random baseline.}
At each iteration, the policy selects up to $B$ frontier nodes uniformly at random for testing.

\paragraph{Greedy baseline.}
At each iteration, the policy selects the $B$ frontier nodes with the largest number of positively tested neighbors.
This heuristic exploits network structure under the assumption that proximity to known positives increases the likelihood of finding additional infections.

\begin{table}[t]
\centering
\caption{Hyperparameters used for sexually transmitted infection testing.}
\label{tab:network_hparams}
\begin{tabular}{l l}
\toprule
\textbf{Hyperparameter} & \textbf{Setting} \\
\midrule
\multicolumn{2}{l}{\textit{Training schedule}} \\
Warmup steps & 1000 \\
Training episodes (\texttt{train\_updates}) & 2000 \\
Batch size & 64 \\
\midrule
\multicolumn{2}{l}{\textit{Core RL / optimization}} \\
Discount factor ($\gamma$) & 0.99 \\
Learning rate (Adam) & $4\times10^{-4}$ \\
Target update rate ($\tau$) & 0.005 \\
Delayed update & 2 \\
Reward scale & 1.0 \\
Gradient clipping (max norm) & 5.0 \\
\midrule
\multicolumn{2}{l}{\textit{Policy training}} \\
Number of particles ($K$) & 12 \\
$\lambda$ schedule & start 2.0, end 0.8, steps 10000  (Syphilis: end 0.5, steps 30000) \\
Weight clip ($w_{\max}$) & 4.0 \\
vMF concentration ($\kappa$) & Chlamydia, HIV: 60.0 / Syphilis, Gonorrhea: 40.0 \\
vMF perturbations per candidate ($J$) & 1 \\
Flow steps & 36 \\
Q-norm clip & 3.0 \\
Q running beta & 0.05 \\
\midrule
\multicolumn{2}{l}{\textit{Policy network (GIN)}} \\
Hidden dimension & 128 \\
GIN layers & 3 \\
Time embedding dim & 32 \\
Optimizer & Adam (for both the actor and critic) \\
\bottomrule
\end{tabular}
\end{table}

\section{Computing Infrastructure}
Experiments were conducted with 4× NVIDIA H200 GPUs (with ~1.5 TB host memory and 64 CPU cores).

\section{Additional Experimental Results}
\label{app:additional-exp}

\paragraph{Scalability to larger problem sizes.}
The main benchmarks in \cref{sec:ben} use a fixed problem size ($N=40$ patients, $B=10$
budget). To assess how \ours{} scales, we run Dynamic Scheduling at substantially larger sizes,
denoted $N/B$ where $N$ is the number of patients and $B$ the budget; larger $N$ and $B$ make the
combinatorial action space harder. We compare against SRL, the strongest baseline on this task.
As shown in \cref{tab:scalability}, \ours{} continues to outperform SRL across all sizes, indicating
that learning the flow policy on $\Sphere^{m-1}$ remains effective as the action dimension grows.

\begin{table}[h]
\centering
\caption{Reward on Dynamic Scheduling at increasing problem sizes ($N/B$). Higher is better;
mean\,$\pm$\,std over evaluation episodes.}
\label{tab:scalability}
\begin{tabular}{lcccc}
\toprule
Method & 100\,/\,20 & 200\,/\,30 & 500\,/\,50 & 1000\,/\,80 \\
\midrule
SRL  & 39.62\std{1.24} & 57.64\std{1.44} & 111.02\std{2.27} & 204.07\std{1.49} \\
Ours & \textbf{47.72}\std{1.86} & \textbf{69.27}\std{1.85} & \textbf{131.58}\std{1.53} & \textbf{218.77}\std{2.97} \\
\bottomrule
\end{tabular}
\end{table}

\paragraph{Robustness to the choice of solver.}
Our framework treats the combinatorial solver as a black box, so the learned policy should not
depend on a particular solver implementation. We verify this on Dynamic Intervention by swapping the
solver backend among Gurobi, SCIP, and CPLEX while keeping everything else fixed. The resulting
rewards are nearly identical (Gurobi: 17.21\std{0.39}; SCIP: 17.03\std{0.59}; CPLEX:
17.34\std{0.47}), confirming that \ours{} is robust to the choice of solver.


\end{document}